\newcommand{\lrVert}[1]{\left\|#1\right\|}
\newcommand{\first}[1]{\textbf{\textcolor{red}{#1}}}
\newcommand{\second}[1]{\textbf{\textcolor{violet}{#1}}}
\newcommand{\third}[1]{\textbf{\textcolor{orange}{#1}}}
\newtheorem{theorem}{Theorem}
\newtheorem{lemma}{Lemma}
\newtheorem{definition}{Definition}
\title[Range-aware Positional Encoding via High-order Pretraining: Theory and Practice]{Range-aware Positional Encoding via\\ High-order Pretraining: Theory and Practice}
\author[Nguyen et al.]{%
Viet Anh Nguyen\\
FPT Software AI Center\\
\email{anhnv117@fpt.com}\And
Nhat Khang Ngo\\
Mila - Quebec AI Institute, McGill University\\
\email{khang.ngo@mila.quebec}\And
Truong Son Hy\thanks{Corresponding Author}\\
University of Alabama at Birmingham\\
\email{thy@uab.edu}
}
\begin{document}

\maketitle

\begin{abstract}
Unsupervised pre-training on vast amounts of graph data is critical in real-world applications wherein labeled data is limited, such as molecule properties prediction or materials science. Existing approaches pre-train models for specific graph domains, neglecting the inherent connections within networks. This limits their ability to transfer knowledge to various supervised tasks. In this work, we propose a novel pre-training strategy on graphs that focuses on modeling their multi-resolution structural information, allowing us to capture global information of the whole graph while preserving local structures around its nodes. We extend the work of \textbf{Wave}let \textbf{P}ositional \textbf{E}ncoding (WavePE) from \citet{10.1063/5.0152833} by pretraining a \textbf{H}igh-\textbf{O}rder \textbf{P}ermutation-\textbf{E}quivariant Autoencoder (HOPE-WavePE) to reconstruct node connectivities from their multi-resolution wavelet signals. Unlike existing positional encodings, our method is designed to become sensitivity to the input graph size in downstream tasks, which efficiently capture global structure on graphs. Since our approach relies solely on the graph structure, it is also domain-agnostic and adaptable to datasets from various domains, therefore paving the wave for developing general graph structure encoders and graph foundation models. We theoretically demonstrate that there exists a parametrization of such architecture that it can predict the output adjacency up to arbitrarily low error. We also evaluate HOPE-WavePE on graph-level prediction tasks of different areas and show its superiority compared to other methods. 


\end{abstract}

\section{Introduction} \label{sec:intro}

One of the fastest-growing areas in machine learning is graph representation learning, with impactful applications in biomedicine, molecular chemistry, and network science. Most graph neural networks (GNNs) rely on the message-passing framework that processes graph-structured data by exchanging the vectorized information between nodes on graphs along their edges. Albeit achieving remarkable results in a wide range of tasks on graph data, message-passing neural networks (MPNNs) possess several fundamental limits, including expressiveness \cite{GraphRepresentationLearning:4761, thiede2020general, HyEtAl2018, hy2019covariant, gin}, over-smoothing \cite{chen-oversmoothing}, and over-squashing \cite{topping2022understanding}. In recent years, transformer-based architectures  \cite{graph_transformer_networks, san, generalization_trans, graphormer, 10.1063/5.0152833} have emerged as powerful alternatives to address the mentioned issues of MPNN. The self-attention mechanism in conventional transformers computes the pairwise interactions between the input tokens, enabling the modeling of long-range interactions between distant tokens and overcoming information bottlenecks on graphs \cite{cai2023connection, trang2024scalable, Hy_2023}. While applying transformers to graphs offers advantages, it often necessitates a trade-off between computational resources and performance, particularly when scaling models to massive datasets with thousands to millions of nodes, like citation networks. Augmenting virtual nodes (VN) to the original graph's nodes , which allows global connections among these nodes, has emerged as a promising strategy to balance these two objectives \cite{cai2023connection, Hy_2023, pmlr-v238-luo24a, 10.5555/3305381.3305512}.
\par \vspace{10pt}
However, graph transformers (GTs) and VN-augmented MPNNs disregard the underlying structure of graph data by altering inherent connections among the nodes (i.e., shortening all paths to two). This disregard may explain their limitations in several graph-level prediction tasks. To address this, positional and structural encodings (PSE) are commonly used to enhace structural information in modern GNNs. However, existing PSE encoding methods often have limitations in terms of task specificity. For instance, random walk encodings \citep{rw, RWPE} excel with small molecules, while Laplacian encoding \cite{generalization_trans} captures long-range interactions effectively. Meanwhile, \citet{10.1063/5.0152833} propose Wavelet positional encoding (WavePE), a PSE encoding technique that models the node interactions at multiple scales. By levraging multi-resolution analysis based on Wavelet Transform on graphs \cite{wavelet, hammond2011wavelets, wavelet2018, pmlr-v196-hy22a, pmlr-v228-nguyen24a}, WavePE enables neural networks to capture a comprehensive range of structural information, from local to global properties.

\vspace{10pt}
Real-world applications of learning methods on graphs face two challenges \cite{transfer_learning, pmlr-v97-hendrycks19a}. First, in fields like biology and chemistry, task-specific labeled data is scarce, and acquiring them requires significant time and resources \cite{Zitnik2018}. Second, real-world graph datasets often contain out-of-distribution samples; for example, newly synthesized molecules may have different structures from those in the training set, leading to inaccurate prediction of supervised learning models. Meanwhile, transfer learning is a promising approach to overcoming these challenges. Our key insight is that graph data possesses two distinct feature types: (1) domain-specific (e.g., atom types in molecules or user names citation networks) and (2) topological features. While the former depends on specific domains, the latter is a general form of graph data. This observation gives rise to our idea of \textit{structural pretraining}, wherein we pretrain a model on topological features of graph data. Specifically, given the Wavelet signals of a graph, we train an autoencoder to reconstruct its adjacency matrix. The graph wavelet signals are defined based on functions of eigenvalues of the graph Laplacian at \textit{multiple resolutions}, representing general forms of connectivities among nodes on the graphs \cite{pmlr-v196-hy22a, pmlr-v228-nguyen24a, 10.1063/5.0152833}. As a result, a model trained on these features should be able to capture the underlying patterns of different graph structures and generalize well to downstream tasks. After pretraining, we can use this pretrained model as a general structural encoder that extracts node structural features, which are passed to graph neural networks along with domain-specific features for predicting task-specific properties. 
\vspace{-5pt}
\subsection{Contribution} 
In this work, we propose a new pretraining approach on graphs that leverages the reconstruction of graph structures from the Wavelet signals to generalize structural information on graph data, thus enabling transfer learning to various downstream tasks across various domains of different ranges. Our contributions are three-fold as follows:
\begin{itemize}
    \item[\textbullet] We propose a high-order structural pretrained models for graph-structured data and a loss-masking technique that leverages high-order interactions of nodes on graphs while being aware of the graph size and diameter, therefore capturing better positional and structural information. 
    \item[\textbullet] We theoretically prove that pretraining by reconstruction with multi-resolution Wavelet signals can make autoencoder learn node state after an arbitrarily walk of length $d$, which can contain both local and global information of graph structures. We also analyzed the width required for the latent space of the autoencoder to ensure such performance and also propose a low-rank alternative.
    \item[\textbullet] We empirically show that such pretraining scheme can enhance the performance of supervised models when fine-tuned on downstream datasets of different domains, indicating the generalizabiilty and effectiveness of pretrained structural encoding compared with other domain-specific pretraining methods. 
\end{itemize}

\section{Related work} \label{sec:related}
\paragraph{Graph Positional and Structural Encodings} Node positional encodings are augmented into node features to preserve the graph structure information and increase the expressiveness of MPNNs \cite{signnetbasisnet, rw, wang2022equivariant, JMLR:v24:22-0567} and Graph Transformers \cite{graph_generalize, RWPE, menegaux2023selfattention, kim_pure, gps, graph_transformer_networks}. Many recent works directly use the graph Laplacian eigenvectors as node positional encodings and combine them with domain-specific node features \cite{graph_generalize, kim_pure, gps, graph_transformer_networks}. Other works use several approaches to encode graph's structural information, including random walks \cite{mialon2021graphit, RWPE, rw, li2020distance}, diffusion kernels on graphs \cite{mialon2021graphit, san, 10.1063/5.0152833}, shortest paths \cite{li2020distance, ying-2021}, and unsupervised node embedding methods \cite{wang2022equivariant, liu2023graph}. Our work builds on similar approaches to \citet{wang2022equivariant} and \citet{liu2023graph}, where we pre-train an encoder to capture node positional information in an unsupervised setting. However, these pretraining methods are either struggle to capture long-range information, lack equivariant constraints that is intrinsic on graphs, or being domain-specific on certain datas and fail in terms of transferrability. We overcome both of these shortcomings by designing a permutation equvariant autoencoder that only focus on the intrinsic structure of the graph, neglecting domain-specific features while being able to learn long hops neighbor in a graph. These learned positional features can be adapted to various downstream tasks and generalize well to domain-specific datasets. Furthermore, we theoretically demonstrate that our method is general and can express several common positional encoding schemes.

\paragraph{Pretraining on Graphs} In the age of modern deep learning, pre-training models on massive unlabeled datasets and then fine-tuning them on smaller, labeled datasets has proven remarkably successful in areas like natural language processing \cite{brown2020language, team2023gemini} and computer vision \cite{he2020momentum, radford2021learning}. For graph-structured data, self-supervised pre-training is a growing area of research, with contrastive learning being a popular approach  \cite{pmlr-v139-you21a, pmlr-v139-xu21g, zhu2021empirical, You2020GraphCL, jiao2020sub}. While these methods can enhance performance on various downstream tasks, their reliance on domain-specific features to create different views of the input graphs limits their transferability to other domains. For instance, models pretrained on small molecules struggle to adapt to other graph types like citation networks, proteins, or code repositories. Our work aims to propose a more generalizable graph pre-training method that relies only on graph \textit{intrinsic features}, i.e. adjacency matrices, which can be trained on unfeatured data and easily transferred for predicting properties of
small graph datasets in arbitrary domains. To solidify such transferrability, we elevate a pretraining method that learn a spectrum of range on graphs. This way, structures on different domains belonging to the spectrum can still be captured without being reliant on domain features.
\vspace{-15pt}
\section{Notations and Preliminaries}
\subsection{Notations}
In this work, we define a graph $G = (V, E, \bold{A})$, where $V$ and $E$ denote node and edge sets, respectively, and $\bold{A} \in \mathbb{R} ^ {n\times n}$ is the adjacency matrix. The normalized graph Laplacian is computed as: $\widetilde{L} = \bold{I}_n - \bold{D}^{-1/2} \bold{A} \bold{D}^{-1/2}$ where $\bold{D}$ is the diagonal degree matrix, and $\bold{I}_n$ is the identity matrix.

\subsection{Permutation equivariant function}
In this section, we formally define permutation symmetry, i.e. symmetry to the action of the symmetric group, $\mathbb{S}_n$, and construct permutation-equivariant neural networks to encode graph wavelets. An element $\sigma \in \mathbb{S}_n$ is a permutation of order $n$, or a bijective map from $\{1,2,\ldots, n\}$ to $\{1,2,\ldots, n\}$. We present each element $\sigma$ in $\mathbb{S}_n$ as a permutation matrix $\bold{P}_{\sigma}\in\mathbb{R}^{n\times n}$. For example, the action of $\mathbb{S}_n$ on an adjacency matrix $\bold{A} \in \mathbb{R}^{n \times n}$ and on a latent matrix (i.e. node embedding matrix) $\bold{Z} \in \mathbb{R}^{n \times d_z}$ are:
\[
\sigma(\bold{A}) = \bold{P}_{\sigma}\bold{A}\bold{P}_{\sigma}^{\top}, \ \ \ \ 
\sigma(\bold{Z}) = \bold{P}_{\sigma}\bold{Z},
\]
for $\sigma \in \mathbb{S}_n$. Here, the adjacency matrix $\bold{A}$ is a second-order tensor with a single feature channel, while the latent matrix $\bold{Z}$ is a first-order tensor with $d_z$ feature channels. LIkewise, the permutation on a $k$-th order tensor $\bold{X}\in\mathbb{R}^{n^k\times d}$ operate on the first $k$ dimensions of $\bold{X}$.

\noindent
Furthermore, we define permutation equivariance on non-homogeneous order functions.

\begin{definition} \label{def:Sn-equivariant}
A $\mathbb{S}_n$-equivariant (or permutation equivariant) function is a function $f\colon \mathbb{R}^{n^k \times d} \to \mathbb{R}^{n^{k'} \times {d'}}$ that satisfies $f(\sigma(\bold{X})) = \sigma(f(\bold{X}))$ for all $\sigma \in \mathbb{S}_n$ and $\bold{X} \in \mathbb{R}^{n^k \times d}$. 
\end{definition} 

\section{Methodology} \label{sec:wavelet_pe}

\begin{figure}
     \centering
         \centering
         \includegraphics[width=0.25\textwidth]{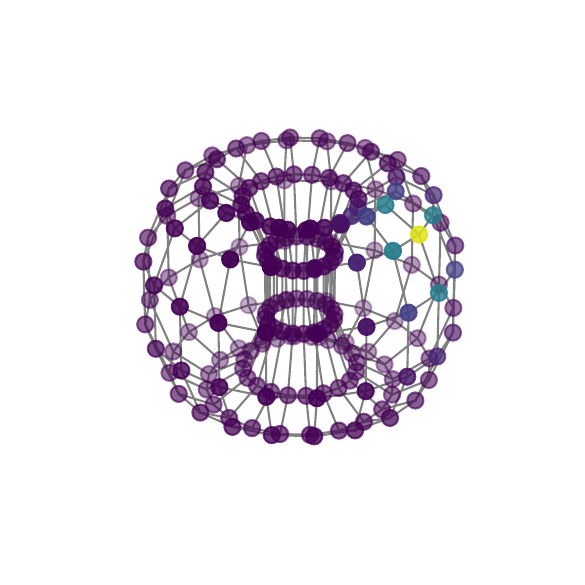}
         \includegraphics[width=0.25\textwidth]{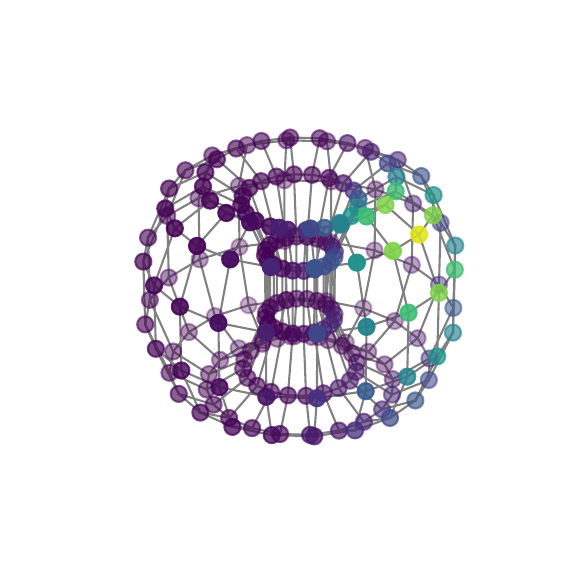}
         \includegraphics[width=0.25\textwidth]{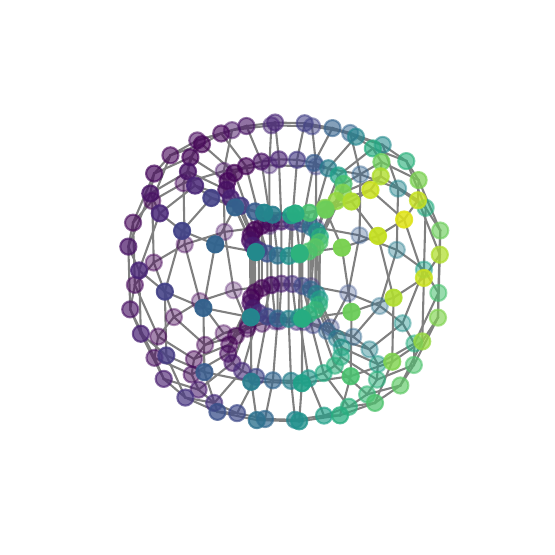}\\
     
     \includegraphics[width=0.35\textwidth]{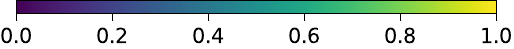}
     
        \caption{Visualization of graph Wavelet on the geometric graph of a torus. The low scaling factor $s$ results in a highly localized structure around the center node (yellow), while higher factors can lead to smoother signals that can spread out to a larger part of the graph with scaling factor 4, 15 and 50 respectively.}
        \label{fig:wavelet}
\end{figure}

\subsection{Spectral Graph Wavelet Tensors}
As the normalized graph Laplacian $\widetilde{L} \in \mathbb{R}^{n \times n}$ is symmetric when $G$ is undirected, we can decompose it into a complete set of orthonormal eigenvectors $U = (u_1, u_2, \cdots, u_n)$ wherein $u_i$ is associated with a real and non-negative eignevalue $\lambda_i$ as:

\begin{equation}
    \widetilde{L} = U \Lambda U^T,
    \label{eq:eigen_decomposition}
\end{equation}
where $\Lambda = \text{diag}(\lambda_1, \cdots, \lambda_n)$ is the diagonal matrix of eigenvalues. The graph Wavelet transforms construct a set of spectral graph Wavelet as bases to project the graph signal from the vertex domain to the spectral domain as: 
\begin{equation}
    \psi_s = U \Lambda_s U^T, 
    \label{eq:wavelet}
\end{equation}
here, $\Lambda_s = \text{diag}(g(s\lambda_1), \cdots, g(s\lambda_n))$ is the scaling matrix of the eigenvalues. The scaling function $g$ takes the eigenvalue $\lambda_i$ and an additional scaling factor $s$ as inputs, indicating how a signal diffuses away from node $i$ at scale $s$; we select $g_s(\lambda) = \exp(-s\lambda)$ as the heat kernel. This means that we can vary the scaling parameter $s$ to adjust the neighborhoods surrounding a center node. Please see \ref{fig:wavelet} for more illustration. Following \citet{10.1063/5.0152833}, a set of $k$ graph Wavelet $\{\psi\}_i^k \in \mathbb{R}^{n\times n}$ can be constructed to result in a tensor of graph Wavelet $\bold{W} \in \mathbb{R}^{n \times n \times k}$. Unlike structural encoding like random walk which only contains information at one hop length, Wavelets structural information decays gradually from each node to its neighborhood, and, thus, contains much more meaningful topological information.

\paragraph{Fast Graph Wavelet Transform} Traditionally, computing graph wavelet bases necessitates the full diagonalization of the graph Laplacian
$\widetilde{L}$ in \ref{eq:eigen_decomposition}. This approach becomes computationally expensive when the number of nodes increases. Our work adopts the efficient algorithm proposed by \citet{hammond2011wavelets}. This method utilizes Chebyshev polynomials to approximate the Wavelet with a time complexity of $O(|E| \times M)$, where $M$ represents the order of the polynomials and $E$ is the set of edges. This approach scales linearly with the number of edges, thereby being more efficient than previous methods based on the eigendecomposition of the graph Laplacian. 

\subsection{Constructing Long-range Pretraining on Domain-Agnostic Data} \label{sec:pretraining}
\begin{figure}[t]
    \centering
    \includegraphics[width = \textwidth]{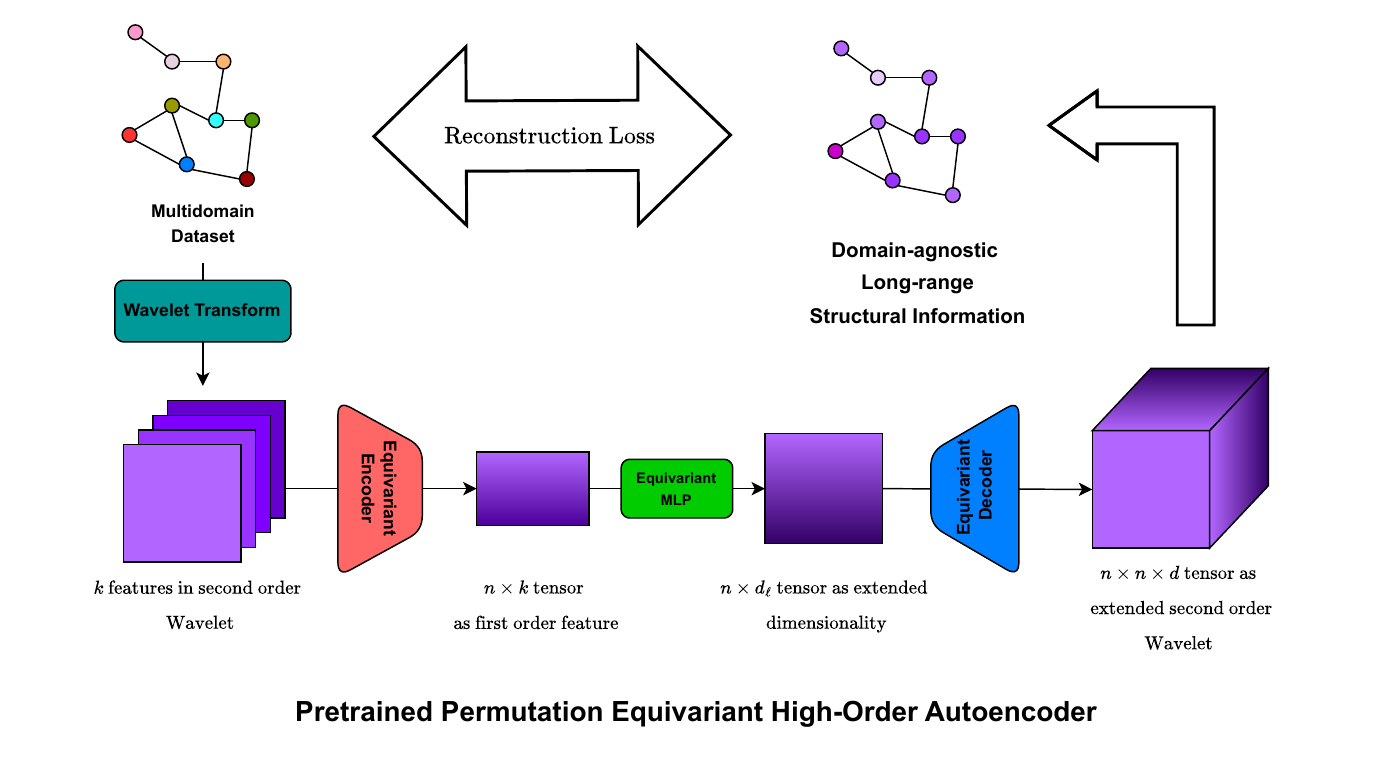}
    \caption{Our proposed equivariant autoencoder pretraining scheme are applied on a large multiple domain dataset while extending the feature degree, overcoming the domain-specific weakness while also embedding long-range information.}\vspace{-5pt}
    \label{fig:AE}
\end{figure}

Our goal is to train a learnable structural encoder to extract generalized and abstract node-level structural features that can be transferred across downstream tasks in graph learning. To fulfill this, we parameterize a high-order autoencoder \cite{Hy_2023, thiede2020general, HyEtAl2018, hy2019covariant, kondor2018covariant} that learns to reconstruct high-order features of graph data from their wavelet tensors $\bold{W}\in\mathbb{R}^{n\times n\times k}$.

In particular, given a second-order wavelet tensor $\bold{W} \in \mathbb{R} ^ {n \times n \times k}$, the encoder $\mathcal{E}$ encodes $\bold{W}$ into a latent matrix $\bold{Z} = \mathcal{E}(\bold{W}) \in \mathbb{R}^{n\times d_\ell}$, the encoder $\mathcal{E}$ can be composed of many equivariant operators. Furthermore, to reduce redundancy caused by high-order training, we extract only two equivariant mappings for the encoder, diagonal and row sum:
\begin{align*}
    \textsc{e}_1(\bold{Z})=\operatorname{diag}(\bold{Z}),\qquad
    \textsc{e}_2(\bold{Z})=\bold{Z}\bold{1}_n.
\end{align*}

Then, the decoder $\mathcal{D}$ lifts $\bold{Z}$ back to a high-order feature map $\mathcal{F}= \mathcal{D}(\bold{Z}) \in \mathbb{R} ^ {n \times n \times d}$. Here, We use the outer product and diagonal operator, which represent structural and positional informationn respectively:
\begin{align*}
    \textsc{d}_1(\bold{Z}_i) = \bold{Z}_i\otimes\bold{Z}_i^{\top},\qquad \textsc{d}_2(\bold{Z}_i) = \operatorname{diag}(\bold{Z}_i)
\end{align*}
where $\otimes$ is the outer product operator, and $\bold{Z}_i$ is the $i$-th channel feature of the encoded information. Here, $d_\ell$ and $d$ denote the channels (dimensions) of each entries in an $n\times n$ matrix.

Finally, we use a channel-wise multi-layer perceptron (MLP) $\phi: \mathbb{R}^{n\times n\times d} \mapsto \mathbb{R}^{n \times n \times r}$ to map $\mathcal{F}$ to a concatenated high-degree adjacency matrix in binary values. Specifically, let $\bold{A}_j$ be the binary matrix of $j$-hop neighbor in a graph and $\widehat{\bold{A}}_j$ be its prediction. The final MLP network returns the predicted array 
\begin{equation}
    \label{predictarrayAj}
    \phi(\bold{Z}) = \left[\begin{array}{cccc}
         \widehat{\bold{A}}_{s_1}& \widehat{\bold{A}}_{s_2} & \dots & \widehat{\bold{A}}_{s_r}
    \end{array}\right],
\end{equation}
where $s_1, s_2, \dots, s_r$ are natural degrees to be chosen. In this work, we let these values follow a exponential pattern, which highlights the range-diversity.

Theoretically, we show that with sufficient budget, our pretraining schema can reach abitraily high accuracy, the full proof is provided at Appendix~\ref{app:approxAd}.

\begin{theorem}
\label{theory:onlytheorem}
    For any $\epsilon>0$ and real coefficients $\theta_1,\theta_2,\dots,\theta_d$, there exists a HOPE-WavePE $\varphi:\mathbb{R}^{n\times n\times d}\to\mathbb{R}^{n\times n}$ such that $$\lrVert{\varphi(\bold{Z})-\sum_{j=1}^r\theta_j\bold{A}_j}<\epsilon.$$
\end{theorem}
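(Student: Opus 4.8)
The plan is to reduce the statement to a one–dimensional spectral approximation problem, exploiting the fact that the heat-kernel wavelets are matrix functions of the normalized Laplacian in a common eigenbasis. First I would record that, with $g_s(\lambda)=\exp(-s\lambda)$, each channel of the input tensor is exactly $\psi_s = U\exp(-s\Lambda)U^\top = \exp(-s\widetilde{L}) = \sum_l e^{-s\lambda_l}\,u_l u_l^\top$, a spectral filter applied to the fixed orthonormal eigenbasis $U$. Each target matrix $\mathbf{A}_j$ is diagonalized by the \emph{same} eigenbasis, being a fixed (graph-independent) function $p_j$ of $\widetilde{L}$, so the whole target is $\sum_j\theta_j\mathbf{A}_j = h(\widetilde{L})=\sum_l h(\lambda_l)\,u_l u_l^\top$ with $h=\sum_j\theta_j p_j$. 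This places the input channels and the target in one spectral coordinate system and turns the matrix goal into approximating the scalar profile $h$.

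The core step is then the density claim: for suitable distinct scales $s_1,\dots,s_m$, linear combinations $\sum_i c_i e^{-s_i\lambda}$ approximate $h$ uniformly on the spectrum $[0,2]$ of $\widetilde{L}$. I would invoke Stone--Weierstrass: the family $\{e^{-s\lambda}:s>0\}$ is closed under multiplication (an algebra, since $e^{-s\lambda}e^{-s'\lambda}=e^{-(s+s')\lambda}$), separates points, and vanishes nowhere, so its span is dense in $C[0,2]$; hence for any $\delta>0$ there are scales and coefficients with $\sup_{\lambda\in[0,2]}\lvert \sum_i c_i e^{-s_i\lambda}-h(\lambda)\rvert<\delta$. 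Passing from scalars to matrices is immediate by orthonormality of $U$: the error matrix $\sum_i c_i\psi_{s_i}-h(\widetilde{L})$ is symmetric with eigenvalues $\sum_i c_i e^{-s_i\lambda_l}-h(\lambda_l)$, so $\lrVert{\sum_i c_i\psi_{s_i}-h(\widetilde{L})}=\max_l\lvert\sum_i c_i e^{-s_i\lambda_l}-h(\lambda_l)\rvert\le\delta$ in spectral norm (and at most $\sqrt{n}\,\delta$ in Frobenius norm), so taking $\delta$ comparable to $\epsilon$ controls the whole error.

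What remains is to realize the linear map $\mathbf{W}\mapsto\sum_i c_i\psi_{s_i}$ inside the HOPE-WavePE class. Using the eigendecomposition I would choose the latent columns explicitly as signed, scaled eigenvectors $\sqrt{\lvert h(\lambda_l)\rvert}\,u_l$, so that the decoder's outer-product operator $\textsc{d}_1(\mathbf{Z}_l)=\mathbf{Z}_l\otimes\mathbf{Z}_l^\top$ reproduces each rank-one component $\lvert h(\lambda_l)\rvert\,u_l u_l^\top$, and a single channel-wise linear layer (a special case of the MLP $\phi$) combines channels with weights $\operatorname{sign}(h(\lambda_l))$ to yield $\sum_l h(\lambda_l)u_l u_l^\top=h(\widetilde{L})$. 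This is exactly where the width analysis enters: each outer product contributes rank one, so faithfully representing a target of rank $\rho$ forces at least $\rho$ latent channels, giving the worst-case bound $d_\ell\ge n$ and the promised low-rank alternative whenever $h$ annihilates most of the spectrum.

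I expect the binding difficulty to be the realization step rather than the density step. Concretely, I must argue that the permutation-equivariant encoder built only from $\textsc{e}_1=\operatorname{diag}$ and $\textsc{e}_2=\mathbf{Z}\mapsto\mathbf{Z}\mathbf{1}_n$ is expressive enough to emit a latent whose outer products give a valid symmetric factorization of $h(\widetilde{L})$, despite the sign and eigenspace-degeneracy ambiguities that plague equivariant spectral computation; alternatively I can observe that $\sum_i c_i\psi_{s_i}$ is already a channel-wise linear function of $\mathbf{W}$, pushing the whole burden onto $\phi$ while the rank-one outer-product bottleneck remains the constraint to track. The last technical point to pin down is the normalization mismatch between the wavelets (functions of $\widetilde{L}$) and the hop matrices $\mathbf{A}_j$ (built from the unnormalized adjacency), which I would resolve by restricting to regular graphs or by absorbing the degree normalization and any fixed thresholding directly into the profile $h$.
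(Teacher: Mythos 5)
Your reduction collapses at the claim that each hop matrix $\mathbf{A}_j$ ``is diagonalized by the same eigenbasis, being a fixed function $p_j$ of $\widetilde{L}$.'' That is false, and it is not a minor normalization issue that can be patched by restricting to regular graphs or ``absorbing any fixed thresholding into the profile $h$.'' The binary $j$-hop matrix is obtained from the $j$-th power of the normalized adjacency by an \emph{entrywise} threshold, $\mathbf{A}_j=\zeta\bigl((\mathbf{I}_n-\widetilde{L})^j\bigr)$ with $\zeta$ applied entry by entry, and entrywise nonlinearities do not commute with the eigendecomposition: $\zeta(UMU^{\top})\neq U\,\zeta(M)\,U^{\top}$ in general. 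Already for $j=1$ and a non-regular graph, the binary adjacency $\mathbf{A}$ does not even commute with $\widetilde{L}$, so it lies outside the commutative algebra $\{h(\widetilde{L})\}$ entirely; and even for regular graphs, where the degree normalization disappears, $\zeta\bigl((\mathbf{A}/d)^j\bigr)$ is still not a spectral function of $\widetilde{L}$. Since everything your construction can emit --- linear combinations $\sum_i c_i\psi_{s_i}$, outer products of scaled eigenvectors recombined by a channel-wise \emph{linear} map --- stays inside that commutative algebra, your output can approximate $h(\widetilde{L})$ for a scalar profile $h$ but can never get within arbitrary $\epsilon$ of $\sum_j\theta_j\mathbf{A}_j$. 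Your Stone--Weierstrass step is fine as far as it goes; it proves density in the wrong target class.

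The ingredient you are missing is precisely the one the paper's proof is built around: an entrywise nonlinearity after the spectral stage. The paper first recovers the powers $(\mathbf{I}_n-\widetilde{L})^j$ from the wavelet channels by a broadcasted linear map that inverts the exponential-to-polynomial change of basis (your Stone--Weierstrass density argument is a legitimate substitute for this half), and then applies, entry by entry, a two-hidden-unit ReLU network approximating the step function $\zeta$, using that the entries of $(\mathbf{I}_n-\widetilde{L})^j$ are nonnegative and nonzero exactly where a length-$j$ walk exists, so that $\zeta\bigl((\mathbf{I}_n-\widetilde{L})^j\bigr)=\mathbf{A}_j$. The thresholding thus happens in the vertex domain, on matrix entries, not in the spectral domain on eigenvalues --- which is exactly why it cannot be folded into the scalar profile $h$ as you propose. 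If you re-route your argument so that the channel-wise MLP $\phi$ is nonlinear and acts entrywise on your recovered powers, rather than being a linear recombination of rank-one spectral factors, your first half plugs into the paper's second half and the proof closes.
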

\paragraph{Masking grants generalizability} In sparse structures, learning different hop adjacencies imposes disproportionality in edges and non-edges, which differs with the adjcency degree and also data structures. For example, a molecule graph from MoleculeNet \cite{wu2018moleculenet} have shorter diameter than peptides \cite{long_range}. Thus, long degree adjacency is redundant for small molecules and hurts the generalizability for bigger graphs. To fix this, we use a binary mask $\bold{M}\in\mathbb{R}^{n\times n\times r}$, filtering out random entries in both the ground truth and label such that non edges and edges are equal in quantity for each degree of predicted adjacency. In particular, denote $\mathcal{M}(\bold{P})$ and $\mathcal{N}(\bold{P})$ as the number of non-zero and zero entries in matrix $\bold{P}$ respectively, our formula for the masked value quantity is 
\begin{equation}
\label{maskquantity}
    \mathcal{M}(\bold{M}_i)=\min\left(\mathcal{M}(\bold{A}),\mathcal{N}(\bold{A}), \bold{T}\right),
\end{equation}
where $\bold{T}$ is some fixed threshold. The remaining entries define the binary cross entropy loss:
$$\mathcal{L}_{\bold{M}}(\bold{A}_{s_i},\widehat{\bold{A}}_{s_i}) = \operatorname{BinCrossEntropy}\left(\bold{M}_i\odot\bold{A}_{s_i}, \bold{M}_i\odot\widehat{\bold{A}}_{s_i}\right),$$
where $\odot$ is the elementwise matrix product. There are two main motivations for this. \textbf{1)} Firstly, thresholding adjacency channels reduces training cost while keeping a fair learning quantity between hop lengths. \textbf{2)} Secondly, the masked AE can learn long hops for big graphs independently from smaller graphs which are masked out based on \eqref{maskquantity}. This renders the autoencoder \textit{range-awareness} of the input graph in downstream tasks and augment meaningful information to node features.
\vspace{-5pt}
\section{Experiments} \label{sec:experiments}
This section empirically evaluates the advantages of our structurally pre-trained autoencoder (AE). We first detail the pre-training process, which utilizes a large dataset of graph structures. We then present experimental results on various downstream tasks. As our objective is to learn a structural feature extractor for graph data, we aim to demonstrate three key points:
\begin{itemize}
    \item[\textbullet] \textbf{Enhanced performance on small datasets}: The encoder of the pre-trained AE can extract structural features that enhance the performance of graph-level prediction tasks on small datasets.
    \item[\textbullet] \textbf{Effective PSE feature representation}: The learned structural features can capture global information and preserve graph structures when integrated into and fine-tuned along with global models like graph transformers or VN-augmented MPNNs.
    \item[\textbullet] \textbf{Generalizability}: The pre-trained AE can generalize well to out-of-distribution graphs with different connectivity types and sizes from the training dataset.
\end{itemize}

\subsection{Setup}
\paragraph{Pretraining} We pretrained a high-order autoencoder on MolPCBA \cite{wu2018moleculenet} and Peptides-func \cite{long_range}. During this pretraining stage, the autoencoder focuses on learning a set of topological hops, represented by the concatenated tensor $\{\bold{A}_{s_i}\}_{i=1}^r$. By excluding chemical features during pretraining, we granted the network versatility, enabling it to adapt to downstream tasks that use different feature representations. To incorporate multi-scale information, we constructed a 4-channel Wavelet tensor for each graph sample, with scaling factors of $[1,2,4,16]$. The autoencoder architecture consisted of an encoder and a decoder, each containing three high-order linear layers. Another three-layer multilayer perceptron (MLP) was used to project the encoder's output into a latent embedding of size 20. After being projected back by the decoder to generate new scaled wavelets, a MLP is used to learn the adjacency tensor $\{\bold{A}_{s_i}\}_{i=1}^r$. We divide the MolPCBA dataset into a train-valid ratio of 9:1 and use the prepaired train-valid set for Peptides-func.
\vspace{-5pt}
\paragraph{Downstream task} After pretraining, we integrated the pretrained autoencoder into other graph neural networks for graph-level supervised tasks. We concatenated the node-level structural features extracted by the pretrained autoencoder with domain-specific features before feeding them into GNN-baed models for downstream evaluation. The implementation details for each specific downstream task are provided in Appendix~\ref{sec:implementation}.

\subsection{Results}
While we pretrained our HOPE-WavePE autoencoder on the MolPCA and Peptides-func, the performance on downstream tasks within this benchmark may be limited to the chemical structures. To address this, we also evaluate HOPE-WavePE on a broader range of real-world graph datasets. These datasets encompass diverse graph structures, connectivity types, sizes, and domain-specific features, significantly differing from those found in molecules and peptides, highlighting how range-awareness adapt to extreme out-of-distribution (OOD) scenerios.

\paragraph{Moleculenet and Long Range Graph Benchmark} We first evaluate our encoding schema on the two data structures used for pretraining phase. In particular, we use scaffold splitting on five small molecule datasets from Moleculenet \cite{wu2018moleculenet} and two datasets Peptides-func and Peptides-struct from Long Range Graph Benchmark (LRGB) \cite{long_range}, which are shown in Table~\ref{tab:moleculenet_result} and \ref{tab:LRGB}. For Moleculenet, we compare our methods with multiple baselines, including supervised and pretraining. For LRGB, we compare our method against some standard message-passing neural networks and show that by augmenting using virtual node and HOPE-WavePE, they can outperform many Graph Transformer methods.


\begin{table}[]
\centering
\small
\caption{Experimental results on five small molecule (scaffold-split) classification datasets. Methods are evaluated by ROC-AUC \% ($\uparrow$) where higher scores mean better performances. We report the mean and standard deviation (in brackets) of all methods over three random seeds. Top 3 results are highlighted, including \first{First}, \second{Second}, and \third{Third}.}
\begin{tabular}{@{}llccccc@{}}
\toprule
\textbf{Method} & \textbf{BBBP} & \textbf{BACE} & \textbf{Tox21} & \textbf{ToxCast} & \textbf{SIDER} \\ 
\midrule

D-MPNN \cite{yang2019analyzing} & \third{71.0(0.3)} & 80.9(0.6) & 75.9(0.7) & \third{65.5(0.3)} & 57.0(0.7) \\ 

AttentiveFP \cite{xiong2019pushing} & 64.3(1.8) & 78.4(0.0) & \third{76.1(0.5)} & 63.7(0.2) & 60.6(3.2)  \\

\midrule

N-GramRF \cite{N_gram}  & 69.7(0.6) & 77.9(1.5) & 74.3(0.4) & - & \first{66.8(0.7)} \\

N-GramXGB \cite{N_gram}  & 69.1(0.8) & 79.1(1.3) & 75.8(0.9) & - & \second{65.5(0.7)}  \\

PretrainGNN \cite{strategies_pretrain} & 68.7(1.3) & {79.9(0.9)} & \second{76.7(0.4)} & \second{65.7(0.6)} & 62.7(0.8) \\
                             
GROVERbase \cite{GROVER} & 70.0(0.1) & \third{82.6(0.7)} & 74.3(0.1) & 65.4(0.4)  & {64.8(0.6)} \\
                             
GROVERlarge \cite{GROVER} & 69.5(0.1) & 81.0(1.4) & 73.5(0.1) & 65.3(0.5) & \third{65.4(0.1)} \\

GraphLOG pretrained \cite{GraphLog} & \first{72.5(0.8)} & \second{83.5(1.2)} & 75.7(0.5) & 63.5(0.7) & 61.2 (1.1)\\

GraphCL pretrained \cite{GraphCL} & 69.7(0.7) & 75.4(1.4) & 73.9(0.7) & 62.4(0.6) & 60.5(0.9) \\

InfoGraph pretrained \cite{InfoGraph} & 66.3(0.6) & 64.8(0.7) & 68.1(0.6) & 58.4(0.6) & 57.1(0.8) \\

\midrule                 
MPNN + HOPE-WavePE (ours) & \second{71.2(0.4)} & {\first{86.8(0.4)}} &                                       {\first{78.0(0.1)}} & \first{66.9(0.5)} & {64.7(0.3)} \\ 
\bottomrule
\end{tabular}
\label{tab:moleculenet_result}
\end{table}

\begin{table}[h]
\centering 

\small
\caption{Experimental results on Peptides func and Peptides struct. We report the means and standard deviations of 4 random seeds. The best results are highlighted \first{First}, \second{Second} and \third{Third}}
\begin{tabular}{@{}lcc@{}}
\toprule
\multicolumn{1}{l}{\textbf{Model}} &
\begin{tabular}[c]{@{}c@{}}\textbf{Peptides Struct}\\ \textbf{MAE} $\downarrow$\end{tabular} & \begin{tabular}[c]{@{}c@{}}\textbf{Peptides Func}\\ \textbf{AP} $\uparrow$ \end{tabular} \\ \midrule
GCN \cite{gcn}& 0.3496 ± 0.0013 & 0.5930 ± 0.0023 \\ 
GINE \cite{gin} & 0.6346 ± 0.0071 & 0.5498 ± 0.0079 \\
GatedGCN \cite{gated_gcn} & 0.3420 ± 0.0013 & 0.5864 ± 0.0077 \\
GatedGCN + RWSE \cite{gated_gcn} & 0.3357 ± 0.0006 & 0.6069 ± 0.0035  \\
\midrule 
Drew-GCN + LapPE \cite{drew} & 0.2536 ± 0.0015  & \second{0.7150 ± 0.0044}  \\
GraphGPS + LapPE \cite{gps} & 0.2500 ± 0.0005 & 0.6535 ± 0.0041 \\ 
GRIT \cite{RWPE}         & \second{0.2460 ± 0.0012}  & \third{0.6988 ± 0.0082}   \\

Graph VIT \cite{graphmlpmixer}      & \third{0.2449 ± 0.0016}  & 0.6942 ± 0.0075   \\
GraphMLP Mixer \cite{graphmlpmixer}   & 0.2475 ± 0.0015  & 0.6921 ± 0.0054    \\

GatedGCN + VN + RWSE \cite{cai2023connection}  & 0.2529 ± 0.0009  & 0.6685 ± 0.0062    \\ 

\midrule

GatedGCN + VN + HOPE-WavePE (ours)  & \first{0.2433 ± 0.0011} & \first{0.7171 ± 0.0023} \\ 

\bottomrule
\end{tabular}
\label{tab:LRGB}
\end{table}


\paragraph{TUDataset Benchmark} We evaluate HOPE-WavePE on six diverse datasets from the TUDataset benchmark \cite{morris2020tudataset}: a small molecule dataset (MUTAG), two chemical compound datasets (NCI1 and NCI109), a macromolecule dataset (PROTEINS) and two social network datasets (IMDB-B and IMDB-M). In this task, we combine the structural features extracted by the pretrained autoencoder with node domain features and feed them into an MPNN consisting of five GIN \cite{gin} layers. The results in \ref{tab:tu_dataset_result} demonstrate that GIN augmented with HOPE-WavePE significantly outperforms other complicated high-order networks like IGN \cite{maron2018invariant}, CIN, and PPGNs on three out of six datasets.

\begin{table}[h]
    \centering
    \caption{Experimental results on TU datasets. The methods are evaluated by Accuracy \% ($\uparrow$). The reported results are means and standard deviations of runnings over five random seeds. Top 3 results are highlighted, including \first{First}, \second{Second}, and \third{Third}.}\resizebox{\linewidth}{!}{%
    \begin{tabular}{l|cccc|cc}
        \toprule
        \textbf{Method} & 
        \textbf{MUTAG} &
        \textbf{PROTEINS} &
        \textbf{NCI1} &
        \textbf{NCI109} &
        \textbf{IMDB-B} & 
        \textbf{IMDB-M} \\
        \midrule       
        RWK \cite{RWK} & 
         79.2 $\pm$ 2.1 & 
         59.6 $\pm$ 0.1 & 
         $>$3 days & 
         - & 
         - &
         - \\
        
        GK ($k=3$) \cite{GK3} &
        81.4 $\pm$ 1.7 & 
        71.4 $\pm$ 0.3 & 
        62.5 $\pm$ 0.3 & 
        62.4 $\pm$ 0.3 &
        - &
        - \\

        PK \cite{PK} & 
         76.0 $\pm$ 2.7& 
         73.7 $\pm$ 0.7 & 
         82.5 $\pm$ 0.5 & 
         - & 
         - & 
         - \\

        WL kernel \cite{WLkernel} &
          90.4 $\pm$ 5.7 & 
          75.0 $\pm$ 3.1 & 
          \first{86.0} $\pm$ 1.8 & 
          - &
          73.8 $\pm$ 3.9 &
          50.9 $\pm$ 3.8 \\

        \midrule
         
        DCNN \cite{DCNN}& 
        -&  
        61.3 $\pm$ 1.6 &
        56.6 $\pm$ 1.0 &
        - &
        49.1 $\pm$ 1.4 &
        33.5 $\pm$ 1.4 \\

        DGCNN \cite{DCGNN}  & 
        85.8 $\pm$ 1.8 & 
        75.5 $\pm$ 0.9 & 
        74.4 $\pm$ 0.5 & 
        - &
        70.0 $\pm$ 0.9 & 
        47.8 $\pm$ 0.9 \\
        
        IGN \cite{maron2018invariant}  &
        83.9 $\pm$ 13.0 &
        {76.6} $\pm$ 5.5 &
        74.3 $\pm$ 2.7 & 
        72.8 $\pm$ 1.5 & 
        72.0 $\pm$ 5.5 & 
        48.7 $\pm$ 3.4 \\
        
        GIN \cite{gin}  & 
        89.4 $\pm$ 5.6 & 
        76.2 $\pm$ 2.8 &
        82.7 $\pm$ 1.7 &
        - & 
        75.1 $\pm$ 5.1 &
        52.3 $\pm$ 2.8 \\

        PPGNs \cite{ppgn} &
        {90.6} $\pm$ 8.7 &
        \second{77.2} $\pm$ 4.7 & 
        83.2 $\pm$ 1.1 & 
        \third{82.2} $\pm$ 1.4 &
        73.0 $\pm$ 5.8 & 
        50.5 $\pm$ 3.6 \\

        Natural GN \cite{naturalGN} &
        89.4 $\pm$ 1.6 &
        71.7 $\pm$ 1.0 &
        82.4 $\pm$ 1.3 &
        - &
        73.5 $\pm$ 2.0 &
        51.3 $\pm$ 1.5 \\

        GSN \cite{GSN} &
        \third{92.2} $\pm$ 7.5 &
        {76.6} $\pm$ 5.0 & 
        {83.5} $\pm$ 2.0 &
        - & 
        \first{77.8} $\pm$ 3.3 & 
        \first{54.3} $\pm$ 3.3 \\
        
        SIN \cite{SIN}  & 
        -  &
        76.4 $\pm$ 3.3 & 
        82.7 $\pm$ 2.1 &
        - &
        \third{75.6} $\pm$ 3.2 & 
        \third{52.4} $\pm$ 2.9 \\

        {CIN} \cite{CIN}  & 
        \second{92.7} $\pm$ 6.1 &
        \third{77.0} $\pm$ 4.3 &
        \third{83.6} $\pm$ 1.4 &
        \first{84.0} $\pm$ 1.6 &
        \third{75.6} $\pm$ 3.7 & 
        \second{52.7} $\pm$ 3.1 \\

        \midrule 
        GIN + HOPE-WavePE (ours) &
        \first{93.6} $\pm$ 5.8 &
        \first{79.5} $\pm$ 4.81 &
        \second{84.5} $\pm$ 2.0 &
          \second{83.1} $\pm$ 1.9      & 
        \second{76.0} $\pm$ 3.7 &
        \second{52.7} $\pm$ 2.9 \\
        
        \bottomrule

    \end{tabular}
    }
\label{tab:tu_dataset_result}
\end{table}
\vspace{-5pt}\paragraph{Different Graph Connectivity Patterns}We evaluate on two image classification datasets, MNIST and CIFAR-10 in \cite{JMLR:v24:22-0567}. In this task, we augment node superpixel features with their structural features before feeding them into GPS \cite{gps}, a graph transformer model. As shown in \ref{tab:image_result}, GPS + HOPE-WavePE achieves comparable accuracy to other MPNNs and GPS models that rely on different explicit positional encoding methods. 

\vspace{-5pt}\paragraph{ZINC Dataset}We evaluate the performance of HOPE-WavePEs on a subset of the ZINC dataset \cite{zinc} containing 12,000 graphs with up to 38 heavy atoms. We assess its ability to perform molecular tasks using GPS \cite{gps} equipped with HOPE-WavePE. The results in Table~\ref{tab:image_result} our method outperforms the two most popular encoding baselines, LapPE and RWSE, and show statistically significant differences compared to traditional GNN methods.

\begin{table}[h]
    \centering
    \small
    \caption{Experimental results on image classification tasks and ZINC. The results are reported by taking the mean $\pm$ the std over four random seeds. Top 3 results are highlighted, including \first{First}, \second{Second} and \third{Third}.}
    \label{tab:image_datasets}
    \resizebox{0.7\linewidth}{!}{%
    \begin{tabular}{l|ccc}
        \toprule \multirow{2}{4em}{\textbf{Model}} & \textbf{CIFAR10} & \textbf{MNIST} & \textbf{ZINC} \\
         & \textbf{ACC} $(\%) \uparrow$ & \textbf{ACC} $(\%) \uparrow$ & \textbf{MAE} $\downarrow$ \\
        \midrule GIN \cite{gin}  & 55.26 $\pm$ 1.53 & 96.49 $\pm$ 0.25 & 0.526 $\pm$ 0.051 \\
        GAT \cite{gat}& 64.22 $\pm$ 0.45 & 95.53 $\pm$ 0.20 & 0.384 $\pm$ 0.007 \\
        GatedGCN \cite{gated_gcn} & 67.31 $\pm$ 0.31 & 97.34 $\pm$ 0.14 & 0.282 $\pm$ 0.015 \\
        Graph MLP-Mixer \cite{graphmlpmixer} & \second{72.46 $\pm$ 0.36}  & \first{98.35 $\pm$ 0.10} & \third{0.077 $\pm$ 0.003}\\
        \midrule GPS + none \cite{GPSE}  & {71.76 $\pm$ 0.01} &{98.05 $\pm$ 0.00} & - \\
        GPS + (RWSE/LapPE) \cite{gps}  & \third{72.30 $\pm$ 0.36} & \third{98.05 $\pm$ 0.13} & \second{0.070 $\pm$ 0.004} \\
        
        \midrule GPS + HOPE-WavePE (ours)& \first{73.01 $\pm$ 0.81} & \second{98.15 $\pm$ 0.11} & \first{0.067 $\pm$ 0.002}\\
        \bottomrule
    \end{tabular}
    }
    \label{tab:image_result}
\end{table}

\subsection{Ablation Study}

We have three focuses for our ablations. First, we explore how does the number of wavelet channels affect the reconstructability of the adjacency tensor. Secondly, we show that cross-domain training only improve performance incrementally. And finally, we show how masked and unmasked training affect the reconstructability of HOPE-WavePE on medium-sized graphs.


\paragraph{Wavelet Channel Number} As shown in Figure~\ref{fig:wavelet}, different scaling factors impact the receptive field of each node. Since each resolution is responsible for learning a specific range of hops, more multiresolutions should capture diverse-range dependency. We visualize in Figure~\ref{fig:structacc} by recording the reconstruction accuracies of the concatenated array $\{\bold{A}_{2^i}\}_{i=0}^7$
given $7, 10, 20, 30$ and $60$ wavelet channels respectively. The experiment is conducted on the peptides-struct dataset from the Long Range Graph Benchmark \cite{long_range}, which highlights the long-range learnability.

\begin{figure}
\vspace{-15pt}
    \centering
    \includegraphics[width=0.7\linewidth]{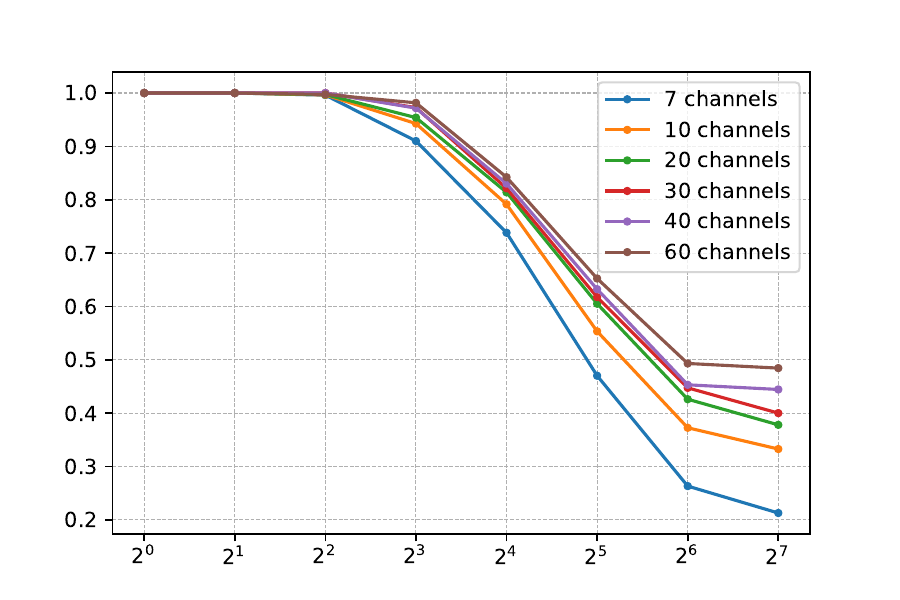}
    \caption{Average reconstruction accuracy on Peptides-struct graphs with different wavelet channel quantities.}
    \label{fig:structacc}
\end{figure}

\begin{figure}[h]
\vspace{-15pt}
    \centering
    \includegraphics[width=0.7\linewidth]{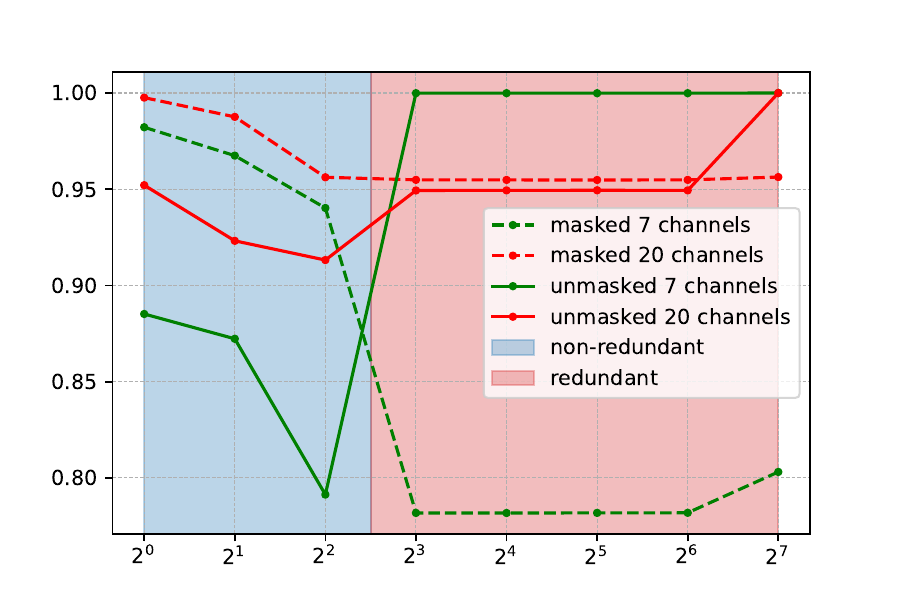}
    \caption{Masked vs unmasked reconstruction accuracy on MNIST dataset. The regions shaded \textcolor{red}{\textbf{red}} and \textcolor{blue}{\textbf{blue}} contains hop lengths $s$ in which $\bold{A}_s$ are and are not all ones, respectively.}
    \vspace{-10pt}
    \label{fig:mask_vs_unmasked}
\end{figure}

\paragraph{Cross-Domain Training} To further verify transferability of HOPE-WavePE onto different tasks in a direct manner, we train HOPE-WavePE on four different datasets with great domain disparity, namely: ZINC (molecule), MNIST (image), ogbl-collab (collaborative network) and ogbl-ppa (interaction network), and record the decoded accuracy on the reconstruction the adjacency matrix. To avoid resource insufficency in processing the two ogbl tasks while not changing their intrinsic structure, we partitioned them into almost similar size graphs as in MNIST and ZINC using METIS algorithm. Our results shown in Table \ref{abl:crossdomain} indicates the transferrability of HOPE-WavePE to extremely different domains, given that their sizes are similar.

\begin{table}[h]
    \centering
    \small
    \caption{Accuracy of HOPE-WavePE on cross-domain training.}
    \label{abl:crossdomain}
    \resizebox{0.7\linewidth}{!}{%
    \begin{tabular}{l|l|cccc}
    \toprule
    {\textbf{Domain Type} } & { \textbf{Name} } & \textbf{ZINC} & \textbf{MNIST}&\textbf{ogbl-collab}& \textbf{ogbl-ppa}\\
    \midrule Molecular & ZINC & 0.9891 & 0.9734&0.9812&0.9734 \\
    Image & MNIST & 0.9652 & 0.9753 & 0.9652 & 0.9712\\
    Collaborative Network & ogbl-collab & 0.9860 & 0.9860 & 0.9742 & 0.9812 \\
    Interaction Network & ogbl-ppa & 0.9767 & 0.9767 & 0.9832 & 0.9912\\
    \bottomrule
    \end{tabular}
    }
    
\end{table}

\paragraph{Masked vs Unmasked} To visually illustrate our motivation of masking the autoencoder, we evaluated HOPE-WavePE on reconstructing the concatenated adjacency $\{\bold{A}_{2^i}\}_{i=0}^7$ over the MNIST dataset. Note that, these graphs are connected and their diameter is far below the considered range. In particular, $\bold{A}_{s}=\bold{1}_{n\times n}$ for all $s\geq 8$, and, thus, are redundant in the structural learning. From Figure~\ref{fig:mask_vs_unmasked}, we see that masking significantly removes redundant learning from exceedingly long range, which would punishes important ranges otherwise. 

\section{Conclusion} \label{sec:conclusion}
We have introduced HOPE-WavePE, a novel high-order permutation-equivariant pretraining method specifically designed for graph-structured data. Our approach leverages the inherent connectivity of graphs, eliminating reliance on domain-specific features while being range-aware. This enables HOPE-WavePE to generalize effectively across diverse graph types and domains. The superiority of HOPE-WavePE is demonstrably proven through both theoretical and empirical analysis. Finally, we have discussed the potential of HOPE-WavePE as a foundation for a general graph structural encoder. A promising future direction will be to focus on optimizing the scalability of this approach. 

\bibliographystyle{unsrtnat}
\bibliography{paper}

\providecommand{\noopsort}[1]{}\providecommand{\singleletter}[1]{#1}%
\begin{thebibliography}{81}
\providecommand{\natexlab}[1]{#1}
\providecommand{\url}[1]{\texttt{#1}}
\expandafter\ifx\csname urlstyle\endcsname\relax
  \providecommand{\doi}[1]{doi: #1}\else
  \providecommand{\doi}{doi: \begingroup \urlstyle{rm}\Url}\fi

\bibitem[Ngo et~al.(2023)Ngo, Hy, and Kondor]{10.1063/5.0152833}
Nhat~Khang Ngo, Truong~Son Hy, and Risi Kondor.
\newblock {Multiresolution graph transformers and wavelet positional encoding
  for learning long-range and hierarchical structures}.
\newblock \emph{The Journal of Chemical Physics}, 159\penalty0 (3):\penalty0
  034109, 07 2023.
\newblock ISSN 0021-9606.
\newblock \doi{10.1063/5.0152833}.
\newblock URL \url{https://doi.org/10.1063/5.0152833}.

\bibitem[Hy()]{GraphRepresentationLearning:4761}
Truong~Son Hy.
\newblock Graph representation learning, deep generative models on graphs,
  group equivariant molecular neural networks and multiresolution machine
  learning.
\newblock page 366.
\newblock \doi{https://doi.org/10.6082/uchicago.4761}.
\newblock URL \url{http://knowledge.uchicago.edu/record/4761}.

\bibitem[Thiede et~al.(2020)Thiede, Hy, and Kondor]{thiede2020general}
Erik~Henning Thiede, Truong~Son Hy, and Risi Kondor.
\newblock The general theory of permutation equivarant neural networks and
  higher order graph variational encoders.
\newblock \emph{arXiv preprint arXiv:2004.03990}, 2020.

\bibitem[Hy et~al.(2018)Hy, Trivedi, Pan, Anderson, and Kondor]{HyEtAl2018}
Truong~Son Hy, Shubhendu Trivedi, Horace Pan, Brandon~M. Anderson, and Risi
  Kondor.
\newblock {Predicting molecular properties with covariant compositional
  networks}.
\newblock \emph{The Journal of Chemical Physics}, 148\penalty0 (24):\penalty0
  241745, 06 2018.
\newblock ISSN 0021-9606.
\newblock \doi{10.1063/1.5024797}.
\newblock URL \url{https://doi.org/10.1063/1.5024797}.

\bibitem[Hy et~al.(2019)Hy, Trivedi, Pan, Anderson, and
  Kondor]{hy2019covariant}
Truong~Son Hy, Shubhendu Trivedi, Horace Pan, Brandon~M Anderson, and Risi
  Kondor.
\newblock Covariant compositional networks for learning graphs.
\newblock In \emph{Proc. International Workshop on Mining and Learning with
  Graphs (MLG)}, 2019.

\bibitem[Xu et~al.(2019{\natexlab{a}})Xu, Hu, Leskovec, and Jegelka]{gin}
Keyulu Xu, Weihua Hu, Jure Leskovec, and Stefanie Jegelka.
\newblock How powerful are graph neural networks?
\newblock In \emph{7th International Conference on Learning Representations,
  {ICLR} 2019, New Orleans, LA, USA, May 6-9, 2019}. OpenReview.net,
  2019{\natexlab{a}}.
\newblock URL \url{https://openreview.net/forum?id=ryGs6iA5Km}.

\bibitem[Chen et~al.(2020)Chen, Lin, Li, Li, Zhou, and Sun]{chen-oversmoothing}
Deli Chen, Yankai Lin, Wei Li, Peng Li, Jie Zhou, and Xu~Sun.
\newblock Measuring and relieving the over-smoothing problem for graph neural
  networks from the topological view.
\newblock \emph{Proceedings of the AAAI Conference on Artificial Intelligence},
  34\penalty0 (04):\penalty0 3438--3445, Apr. 2020.
\newblock \doi{10.1609/aaai.v34i04.5747}.
\newblock URL \url{https://ojs.aaai.org/index.php/AAAI/article/view/5747}.

\bibitem[Topping et~al.(2022)Topping, Giovanni, Chamberlain, Dong, and
  Bronstein]{topping2022understanding}
Jake Topping, Francesco~Di Giovanni, Benjamin~Paul Chamberlain, Xiaowen Dong,
  and Michael~M. Bronstein.
\newblock Understanding over-squashing and bottlenecks on graphs via curvature.
\newblock In \emph{International Conference on Learning Representations}, 2022.
\newblock URL \url{https://openreview.net/forum?id=7UmjRGzp-A}.

\bibitem[Yun et~al.(2019)Yun, Jeong, Kim, Kang, and
  Kim]{graph_transformer_networks}
Seongjun Yun, Minbyul Jeong, Raehyun Kim, Jaewoo Kang, and Hyunwoo~J Kim.
\newblock Graph transformer networks.
\newblock In H.~Wallach, H.~Larochelle, A.~Beygelzimer, F.~d\textquotesingle
  Alch\'{e}-Buc, E.~Fox, and R.~Garnett, editors, \emph{Advances in Neural
  Information Processing Systems}, volume~32. Curran Associates, Inc., 2019.
\newblock URL
  \url{https://proceedings.neurips.cc/paper/2019/file/9d63484abb477c97640154d40595a3bb-Paper.pdf}.

\bibitem[Kreuzer et~al.(2021)Kreuzer, Beaini, Hamilton, L\'{e}tourneau, and
  Tossou]{san}
Devin Kreuzer, Dominique Beaini, Will Hamilton, Vincent L\'{e}tourneau, and
  Prudencio Tossou.
\newblock Rethinking graph transformers with spectral attention.
\newblock In M.~Ranzato, A.~Beygelzimer, Y.~Dauphin, P.S. Liang, and J.~Wortman
  Vaughan, editors, \emph{Advances in Neural Information Processing Systems},
  volume~34, pages 21618--21629. Curran Associates, Inc., 2021.
\newblock URL
  \url{https://proceedings.neurips.cc/paper_files/paper/2021/file/b4fd1d2cb085390fbbadae65e07876a7-Paper.pdf}.

\bibitem[Dwivedi and Bresson(2020{\natexlab{a}})]{generalization_trans}
Vijay~Prakash Dwivedi and Xavier Bresson.
\newblock A generalization of transformer networks to graphs.
\newblock \emph{CoRR}, abs/2012.09699, 2020{\natexlab{a}}.
\newblock URL \url{https://arxiv.org/abs/2012.09699}.

\bibitem[Ying et~al.(2021{\natexlab{a}})Ying, Cai, Luo, Zheng, Ke, He, Shen,
  and Liu]{graphormer}
Chengxuan Ying, Tianle Cai, Shengjie Luo, Shuxin Zheng, Guolin Ke, Di~He,
  Yanming Shen, and Tie-Yan Liu.
\newblock Do transformers really perform badly for graph representation?
\newblock In A.~Beygelzimer, Y.~Dauphin, P.~Liang, and J.~Wortman Vaughan,
  editors, \emph{Advances in Neural Information Processing Systems},
  2021{\natexlab{a}}.
\newblock URL \url{https://openreview.net/forum?id=OeWooOxFwDa}.

\bibitem[Cai et~al.(2023)Cai, Hy, Yu, and Wang]{cai2023connection}
Chen Cai, Truong~Son Hy, Rose Yu, and Yusu Wang.
\newblock On the connection between mpnn and graph transformer.
\newblock \emph{arXiv preprint arXiv:2301.11956}, 2023.

\bibitem[Trang et~al.(2024)Trang, Ngo, Sonnery, Vo, Ravanbakhsh, and
  Hy]{trang2024scalable}
Thuan Nguyen~Anh Trang, Khang~Nhat Ngo, Hugo Sonnery, Thieu Vo, Siamak
  Ravanbakhsh, and Truong~Son Hy.
\newblock Scalable hierarchical self-attention with learnable hierarchy for
  long-range interactions.
\newblock \emph{Transactions on Machine Learning Research}, 2024.
\newblock ISSN 2835-8856.
\newblock URL \url{https://openreview.net/forum?id=qH4YFMyhce}.

\bibitem[Hy and Kondor(2023)]{Hy_2023}
Truong~Son Hy and Risi Kondor.
\newblock Multiresolution equivariant graph variational autoencoder.
\newblock \emph{Machine Learning: Science and Technology}, 4\penalty0
  (1):\penalty0 015031, mar 2023.
\newblock \doi{10.1088/2632-2153/acc0d8}.
\newblock URL \url{https://dx.doi.org/10.1088/2632-2153/acc0d8}.

\bibitem[Luo et~al.(2024)Luo, Son~Hy, Tabaghi, Defferrard, Rezaei, Carey,
  Davis, Jain, and Wang]{pmlr-v238-luo24a}
Zhishang Luo, Truong Son~Hy, Puoya Tabaghi, Micha\"{e}l Defferrard, Elahe
  Rezaei, Ryan~M. Carey, Rhett Davis, Rajeev Jain, and Yusu Wang.
\newblock {DE-HNN}: An effective neural model for circuit {N}etlist
  representation.
\newblock In Sanjoy Dasgupta, Stephan Mandt, and Yingzhen Li, editors,
  \emph{Proceedings of The 27th International Conference on Artificial
  Intelligence and Statistics}, volume 238 of \emph{Proceedings of Machine
  Learning Research}, pages 4258--4266. PMLR, 02--04 May 2024.
\newblock URL \url{https://proceedings.mlr.press/v238/luo24a.html}.

\bibitem[Gilmer et~al.(2017)Gilmer, Schoenholz, Riley, Vinyals, and
  Dahl]{10.5555/3305381.3305512}
Justin Gilmer, Samuel~S. Schoenholz, Patrick~F. Riley, Oriol Vinyals, and
  George~E. Dahl.
\newblock Neural message passing for quantum chemistry.
\newblock In \emph{Proceedings of the 34th International Conference on Machine
  Learning - Volume 70}, ICML'17, page 1263–1272. JMLR.org, 2017.

\bibitem[Dwivedi et~al.(2022{\natexlab{a}})Dwivedi, Luu, Laurent, Bengio, and
  Bresson]{rw}
Vijay~Prakash Dwivedi, Anh~Tuan Luu, Thomas Laurent, Yoshua Bengio, and Xavier
  Bresson.
\newblock Graph neural networks with learnable structural and positional
  representations.
\newblock In \emph{International Conference on Learning Representations},
  2022{\natexlab{a}}.
\newblock URL \url{https://openreview.net/forum?id=wTTjnvGphYj}.

\bibitem[Ma et~al.(2023)Ma, Lin, Lim, Romero-Soriano, Dokania, Coates,
  H.S.~Torr, and Lim]{RWPE}
Liheng Ma, Chen Lin, Derek Lim, Adriana Romero-Soriano, K.~Dokania, Mark
  Coates, Philip H.S.~Torr, and Ser-Nam Lim.
\newblock Graph {Inductive} {Biases} in {Transformers} without {Message}
  {Passing}.
\newblock In \emph{Proc. {Int}. {Conf}. {Mach}. {Learn}.}, 2023.

\bibitem[Xu et~al.(2019{\natexlab{b}})Xu, Shen, Cao, Qiu, and Cheng]{wavelet}
Bingbing Xu, Huawei Shen, Qi~Cao, Yunqi Qiu, and Xueqi Cheng.
\newblock Graph wavelet neural network.
\newblock In \emph{International Conference on Learning Representations},
  2019{\natexlab{b}}.
\newblock URL \url{https://openreview.net/forum?id=H1ewdiR5tQ}.

\bibitem[Hammond et~al.(2011)Hammond, Vandergheynst, and
  Gribonval]{hammond2011wavelets}
David~K Hammond, Pierre Vandergheynst, and R{\'e}mi Gribonval.
\newblock Wavelets on graphs via spectral graph theory.
\newblock \emph{Applied and Computational Harmonic Analysis}, 30\penalty0
  (2):\penalty0 129--150, 2011.

\bibitem[Donnat et~al.(2018)Donnat, Zitnik, Hallac, and Leskovec]{wavelet2018}
Claire Donnat, Marinka Zitnik, David Hallac, and Jure Leskovec.
\newblock Learning structural node embeddings via diffusion wavelets.
\newblock In \emph{Proceedings of the 24th ACM SIGKDD International Conference
  on Knowledge Discovery $\&$ Data Mining}, KDD '18, page 1320–1329, New
  York, NY, USA, 2018. Association for Computing Machinery.
\newblock ISBN 9781450355520.
\newblock \doi{10.1145/3219819.3220025}.
\newblock URL \url{https://doi.org/10.1145/3219819.3220025}.

\bibitem[Hy and Kondor(2022)]{pmlr-v196-hy22a}
Truong~Son Hy and Risi Kondor.
\newblock Multiresolution matrix factorization and wavelet networks on graphs.
\newblock In Alexander Cloninger, Timothy Doster, Tegan Emerson, Manohar Kaul,
  Ira Ktena, Henry Kvinge, Nina Miolane, Bastian Rieck, Sarah Tymochko, and Guy
  Wolf, editors, \emph{Proceedings of Topological, Algebraic, and Geometric
  Learning Workshops 2022}, volume 196 of \emph{Proceedings of Machine Learning
  Research}, pages 172--182. PMLR, 25 Feb--22 Jul 2022.
\newblock URL \url{https://proceedings.mlr.press/v196/hy22a.html}.

\bibitem[Nguyen et~al.(2024)Nguyen, Nguyen, Hy, and
  Kondor]{pmlr-v228-nguyen24a}
Duc~Thien Nguyen, Manh Duc~Tuan Nguyen, Truong~Son Hy, and Risi Kondor.
\newblock Fast temporal wavelet graph neural networks.
\newblock In Sophia Sanborn, Christian Shewmake, Simone Azeglio, and Nina
  Miolane, editors, \emph{Proceedings of the 2nd NeurIPS Workshop on Symmetry
  and Geometry in Neural Representations}, volume 228 of \emph{Proceedings of
  Machine Learning Research}, pages 35--54. PMLR, 16 Dec 2024.
\newblock URL \url{https://proceedings.mlr.press/v228/nguyen24a.html}.

\bibitem[Pan and Yang(2010)]{transfer_learning}
Sinno~Jialin Pan and Qiang Yang.
\newblock A survey on transfer learning.
\newblock \emph{IEEE Transactions on Knowledge and Data Engineering},
  22\penalty0 (10):\penalty0 1345--1359, 2010.
\newblock \doi{10.1109/TKDE.2009.191}.

\bibitem[Hendrycks et~al.(2019)Hendrycks, Lee, and
  Mazeika]{pmlr-v97-hendrycks19a}
Dan Hendrycks, Kimin Lee, and Mantas Mazeika.
\newblock Using pre-training can improve model robustness and uncertainty.
\newblock In Kamalika Chaudhuri and Ruslan Salakhutdinov, editors,
  \emph{Proceedings of the 36th International Conference on Machine Learning},
  volume~97 of \emph{Proceedings of Machine Learning Research}, pages
  2712--2721. PMLR, 09--15 Jun 2019.
\newblock URL \url{https://proceedings.mlr.press/v97/hendrycks19a.html}.

\bibitem[Zitnik et~al.(2018)Zitnik, Sosi{\v{c}}, and Leskovec]{Zitnik2018}
Marinka Zitnik, Rok Sosi{\v{c}}, and Jure Leskovec.
\newblock Prioritizing network communities.
\newblock \emph{Nature Communications}, 9\penalty0 (1):\penalty0 2544, Jun
  2018.
\newblock ISSN 2041-1723.
\newblock \doi{10.1038/s41467-018-04948-5}.
\newblock URL \url{https://doi.org/10.1038/s41467-018-04948-5}.

\bibitem[Lim et~al.(2022)Lim, Robinson, Zhao, Smidt, Sra, Maron, and
  Jegelka]{signnetbasisnet}
Derek Lim, Joshua Robinson, Lingxiao Zhao, Tess Smidt, Suvrit Sra, Haggai
  Maron, and Stefanie Jegelka.
\newblock Sign and basis invariant networks for spectral graph representation
  learning, 2022.

\bibitem[Wang et~al.(2022)Wang, Yin, Zhang, and Li]{wang2022equivariant}
Haorui Wang, Haoteng Yin, Muhan Zhang, and Pan Li.
\newblock Equivariant and stable positional encoding for more powerful graph
  neural networks.
\newblock In \emph{International Conference on Learning Representations}, 2022.
\newblock URL \url{https://openreview.net/forum?id=e95i1IHcWj}.

\bibitem[Dwivedi et~al.(2023)Dwivedi, Joshi, Luu, Laurent, Bengio, and
  Bresson]{JMLR:v24:22-0567}
Vijay~Prakash Dwivedi, Chaitanya~K. Joshi, Anh~Tuan Luu, Thomas Laurent, Yoshua
  Bengio, and Xavier Bresson.
\newblock Benchmarking graph neural networks.
\newblock \emph{Journal of Machine Learning Research}, 24\penalty0
  (43):\penalty0 1--48, 2023.
\newblock URL \url{http://jmlr.org/papers/v24/22-0567.html}.

\bibitem[Dwivedi and Bresson(2020{\natexlab{b}})]{graph_generalize}
Vijay~Prakash Dwivedi and Xavier Bresson.
\newblock A generalization of transformer networks to graphs.
\newblock \emph{CoRR}, abs/2012.09699, 2020{\natexlab{b}}.
\newblock URL \url{https://arxiv.org/abs/2012.09699}.

\bibitem[Menegaux et~al.(2023)Menegaux, Jehanno, Selosse, and
  Mairal]{menegaux2023selfattention}
Romain Menegaux, Emmanuel Jehanno, Margot Selosse, and Julien Mairal.
\newblock Self-attention in colors: Another take on encoding graph structure in
  transformers.
\newblock \emph{Transactions on Machine Learning Research}, 2023.
\newblock ISSN 2835-8856.
\newblock URL \url{https://openreview.net/forum?id=3dQCNqqv2d}.

\bibitem[Kim et~al.(2022)Kim, Nguyen, Min, Cho, Lee, Lee, and Hong]{kim_pure}
Jinwoo Kim, Dat Nguyen, Seonwoo Min, Sungjun Cho, Moontae Lee, Honglak Lee, and
  Seunghoon Hong.
\newblock Pure transformers are powerful graph learners.
\newblock In S.~Koyejo, S.~Mohamed, A.~Agarwal, D.~Belgrave, K.~Cho, and A.~Oh,
  editors, \emph{Advances in Neural Information Processing Systems}, volume~35,
  pages 14582--14595. Curran Associates, Inc., 2022.
\newblock URL
  \url{https://proceedings.neurips.cc/paper_files/paper/2022/file/5d84236751fe6d25dc06db055a3180b0-Paper-Conference.pdf}.

\bibitem[Ramp\'{a}\v{s}ek et~al.(2022)Ramp\'{a}\v{s}ek, Galkin, Dwivedi, Luu,
  Wolf, and Beaini]{gps}
Ladislav Ramp\'{a}\v{s}ek, Mikhail Galkin, Vijay~Prakash Dwivedi, Anh~Tuan Luu,
  Guy Wolf, and Dominique Beaini.
\newblock {Recipe for a General, Powerful, Scalable Graph Transformer}.
\newblock \emph{arXiv:2205.12454}, 2022.

\bibitem[Mialon et~al.(2021)Mialon, Chen, Selosse, and
  Mairal]{mialon2021graphit}
Gr{\'e}goire Mialon, Dexiong Chen, Margot Selosse, and Julien Mairal.
\newblock Graphit: Encoding graph structure in transformers.
\newblock \emph{arXiv preprint arXiv:2106.05667}, 2021.

\bibitem[Li et~al.(2020)Li, Wang, Wang, and Leskovec]{li2020distance}
Pan Li, Yanbang Wang, Hongwei Wang, and Jure Leskovec.
\newblock Distance encoding: Design provably more powerful neural networks for
  graph representation learning.
\newblock \emph{Advances in Neural Information Processing Systems}, 33, 2020.

\bibitem[Ying et~al.(2021{\natexlab{b}})Ying, Cai, Luo, Zheng, Ke, He, Shen,
  and Liu]{ying-2021}
Chengxuan Ying, Tianle Cai, Shengjie Luo, Shuxin Zheng, Guolin Ke, Di~He,
  Yanming Shen, and Tie-Yan Liu.
\newblock Do transformers really perform badly for graph representation?
\newblock In M.~Ranzato, A.~Beygelzimer, Y.~Dauphin, P.S. Liang, and J.~Wortman
  Vaughan, editors, \emph{Advances in Neural Information Processing Systems},
  volume~34, pages 28877--28888. Curran Associates, Inc., 2021{\natexlab{b}}.
\newblock URL
  \url{https://proceedings.neurips.cc/paper_files/paper/2021/file/f1c1592588411002af340cbaedd6fc33-Paper.pdf}.

\bibitem[Liu et~al.(2023)Liu, Cant{\"u}rk, Lapointe-Gagn{\'e}, L{\'e}tourneau,
  Wolf, Beaini, and Ramp{\'a}{\v{s}}ek]{liu2023graph}
Renming Liu, Semih Cant{\"u}rk, Olivier Lapointe-Gagn{\'e}, Vincent
  L{\'e}tourneau, Guy Wolf, Dominique Beaini, and Ladislav Ramp{\'a}{\v{s}}ek.
\newblock Graph positional and structural encoder.
\newblock \emph{arXiv preprint arXiv:2307.07107}, 2023.

\bibitem[Brown et~al.(2020)Brown, Mann, Ryder, Subbiah, Kaplan, Dhariwal,
  Neelakantan, Shyam, Sastry, Askell, et~al.]{brown2020language}
Tom Brown, Benjamin Mann, Nick Ryder, Melanie Subbiah, Jared~D Kaplan, Prafulla
  Dhariwal, Arvind Neelakantan, Pranav Shyam, Girish Sastry, Amanda Askell,
  et~al.
\newblock Language models are few-shot learners.
\newblock \emph{Advances in neural information processing systems},
  33:\penalty0 1877--1901, 2020.

\bibitem[Team et~al.(2023)Team, Anil, Borgeaud, Wu, Alayrac, Yu, Soricut,
  Schalkwyk, Dai, Hauth, et~al.]{team2023gemini}
Gemini Team, Rohan Anil, Sebastian Borgeaud, Yonghui Wu, Jean-Baptiste Alayrac,
  Jiahui Yu, Radu Soricut, Johan Schalkwyk, Andrew~M Dai, Anja Hauth, et~al.
\newblock Gemini: a family of highly capable multimodal models.
\newblock \emph{arXiv preprint arXiv:2312.11805}, 2023.

\bibitem[He et~al.(2020)He, Fan, Wu, Xie, and Girshick]{he2020momentum}
Kaiming He, Haoqi Fan, Yuxin Wu, Saining Xie, and Ross Girshick.
\newblock Momentum contrast for unsupervised visual representation learning.
\newblock In \emph{Proceedings of the IEEE/CVF conference on computer vision
  and pattern recognition}, pages 9729--9738, 2020.

\bibitem[Radford et~al.(2021)Radford, Kim, Hallacy, Ramesh, Goh, Agarwal,
  Sastry, Askell, Mishkin, Clark, et~al.]{radford2021learning}
Alec Radford, Jong~Wook Kim, Chris Hallacy, Aditya Ramesh, Gabriel Goh,
  Sandhini Agarwal, Girish Sastry, Amanda Askell, Pamela Mishkin, Jack Clark,
  et~al.
\newblock Learning transferable visual models from natural language
  supervision.
\newblock In \emph{International conference on machine learning}, pages
  8748--8763. PMLR, 2021.

\bibitem[You et~al.(2021)You, Chen, Shen, and Wang]{pmlr-v139-you21a}
Yuning You, Tianlong Chen, Yang Shen, and Zhangyang Wang.
\newblock Graph contrastive learning automated.
\newblock In Marina Meila and Tong Zhang, editors, \emph{Proceedings of the
  38th International Conference on Machine Learning}, volume 139 of
  \emph{Proceedings of Machine Learning Research}, pages 12121--12132. PMLR,
  18--24 Jul 2021.
\newblock URL \url{https://proceedings.mlr.press/v139/you21a.html}.

\bibitem[Xu et~al.(2021{\natexlab{a}})Xu, Wang, Ni, Guo, and
  Tang]{pmlr-v139-xu21g}
Minghao Xu, Hang Wang, Bingbing Ni, Hongyu Guo, and Jian Tang.
\newblock Self-supervised graph-level representation learning with local and
  global structure.
\newblock In Marina Meila and Tong Zhang, editors, \emph{Proceedings of the
  38th International Conference on Machine Learning}, volume 139 of
  \emph{Proceedings of Machine Learning Research}, pages 11548--11558. PMLR,
  18--24 Jul 2021{\natexlab{a}}.
\newblock URL \url{https://proceedings.mlr.press/v139/xu21g.html}.

\bibitem[Zhu et~al.(2021)Zhu, Xu, Liu, and Wu]{zhu2021empirical}
Yanqiao Zhu, Yichen Xu, Qiang Liu, and Shu Wu.
\newblock An empirical study of graph contrastive learning.
\newblock \emph{arXiv preprint arXiv:2109.01116}, 2021.

\bibitem[You et~al.(2020{\natexlab{a}})You, Chen, Sui, Chen, Wang, and
  Shen]{You2020GraphCL}
Yuning You, Tianlong Chen, Yongduo Sui, Ting Chen, Zhangyang Wang, and Yang
  Shen.
\newblock Graph contrastive learning with augmentations.
\newblock In H.~Larochelle, M.~Ranzato, R.~Hadsell, M.~F. Balcan, and H.~Lin,
  editors, \emph{Advances in Neural Information Processing Systems}, volume~33,
  pages 5812--5823. Curran Associates, Inc., 2020{\natexlab{a}}.
\newblock URL
  \url{https://proceedings.neurips.cc/paper/2020/file/3fe230348e9a12c13120749e3f9fa4cd-Paper.pdf}.

\bibitem[Jiao et~al.(2020)Jiao, Xiong, Zhang, Zhang, Zhang, and
  Zhu]{jiao2020sub}
Yizhu Jiao, Yun Xiong, Jiawei Zhang, Yao Zhang, Tianqi Zhang, and Yangyong Zhu.
\newblock Sub-graph contrast for scalable self-supervised graph representation
  learning.
\newblock In \emph{2020 IEEE international conference on data mining (ICDM)},
  pages 222--231. IEEE, 2020.

\bibitem[Kondor et~al.(2018)Kondor, Son, Pan, Anderson, and
  Trivedi]{kondor2018covariant}
Risi Kondor, Hy~Truong Son, Horace Pan, Brandon Anderson, and Shubhendu
  Trivedi.
\newblock Covariant compositional networks for learning graphs.
\newblock \emph{arXiv preprint arXiv:1801.02144}, 2018.

\bibitem[Wu et~al.(2018)Wu, Ramsundar, Feinberg, Gomes, Geniesse, Pappu,
  Leswing, and Pande]{wu2018moleculenet}
Zhenqin Wu, Bharath Ramsundar, Evan~N Feinberg, Joseph Gomes, Caleb Geniesse,
  Aneesh~S Pappu, Karl Leswing, and Vijay Pande.
\newblock Moleculenet: a benchmark for molecular machine learning.
\newblock \emph{Chemical science}, 9\penalty0 (2):\penalty0 513--530, 2018.

\bibitem[Dwivedi et~al.(2022{\natexlab{b}})Dwivedi, Rampasek, Galkin, Parviz,
  Wolf, Luu, and Beaini]{long_range}
Vijay~Prakash Dwivedi, Ladislav Rampasek, Mikhail Galkin, Ali Parviz, Guy Wolf,
  Anh~Tuan Luu, and Dominique Beaini.
\newblock Long range graph benchmark.
\newblock In \emph{Thirty-sixth Conference on Neural Information Processing
  Systems Datasets and Benchmarks Track}, 2022{\natexlab{b}}.
\newblock URL \url{https://openreview.net/forum?id=in7XC5RcjEn}.

\bibitem[Yang et~al.(2019)Yang, Swanson, Jin, Coley, Eiden, Gao, Guzman-Perez,
  Hopper, Kelley, Mathea, et~al.]{yang2019analyzing}
Kevin Yang, Kyle Swanson, Wengong Jin, Connor Coley, Philipp Eiden, Hua Gao,
  Angel Guzman-Perez, Timothy Hopper, Brian Kelley, Miriam Mathea, et~al.
\newblock Analyzing learned molecular representations for property prediction.
\newblock \emph{Journal of chemical information and modeling}, 59\penalty0
  (8):\penalty0 3370--3388, 2019.

\bibitem[Xiong et~al.(2019)Xiong, Wang, Liu, Zhong, Wan, Li, Li, Luo, Chen,
  Jiang, et~al.]{xiong2019pushing}
Zhaoping Xiong, Dingyan Wang, Xiaohong Liu, Feisheng Zhong, Xiaozhe Wan, Xutong
  Li, Zhaojun Li, Xiaomin Luo, Kaixian Chen, Hualiang Jiang, et~al.
\newblock Pushing the boundaries of molecular representation for drug discovery
  with the graph attention mechanism.
\newblock \emph{Journal of medicinal chemistry}, 63\penalty0 (16):\penalty0
  8749--8760, 2019.

\bibitem[Liu et~al.(2019)Liu, Demirel, and Liang]{N_gram}
Shengchao Liu, Mehmet~F Demirel, and Yingyu Liang.
\newblock N-gram graph: Simple unsupervised representation for graphs, with
  applications to molecules.
\newblock In H.~Wallach, H.~Larochelle, A.~Beygelzimer, F.~d\textquotesingle
  Alch\'{e}-Buc, E.~Fox, and R.~Garnett, editors, \emph{Advances in Neural
  Information Processing Systems}, volume~32. Curran Associates, Inc., 2019.
\newblock URL
  \url{https://proceedings.neurips.cc/paper_files/paper/2019/file/2f3926f0a9613f3c3cc21d52a3cdb4d9-Paper.pdf}.

\bibitem[Hu et~al.(2020)Hu, Liu, Gomes, Zitnik, Liang, Pande, and
  Leskovec]{strategies_pretrain}
Weihua Hu, Bowen Liu, Joseph Gomes, Marinka Zitnik, Percy Liang, Vijay Pande,
  and Jure Leskovec.
\newblock Strategies for pre-training graph neural networks.
\newblock In \emph{International Conference on Learning Representations}, 2020.
\newblock URL \url{https://openreview.net/forum?id=HJlWWJSFDH}.

\bibitem[Rong et~al.(2020)Rong, Bian, Xu, Xie, Wei, Huang, and Huang]{GROVER}
Yu~Rong, Yatao Bian, Tingyang Xu, Weiyang Xie, Ying Wei, Wenbing Huang, and
  Junzhou Huang.
\newblock Self-supervised graph transformer on large-scale molecular data.
\newblock In \emph{Proceedings of the 34th International Conference on Neural
  Information Processing Systems}, NIPS'20, Red Hook, NY, USA, 2020. Curran
  Associates Inc.
\newblock ISBN 9781713829546.

\bibitem[Xu et~al.(2021{\natexlab{b}})Xu, Wang, Ni, Guo, and Tang]{GraphLog}
Minghao Xu, Hang Wang, Bingbing Ni, Hongyu Guo, and Jian Tang.
\newblock Self-supervised graph-level representation learning with local and
  global structure.
\newblock In Marina Meila and Tong Zhang, editors, \emph{Proceedings of the
  38th International Conference on Machine Learning}, volume 139 of
  \emph{Proceedings of Machine Learning Research}, pages 11548--11558. PMLR,
  18--24 Jul 2021{\natexlab{b}}.
\newblock URL \url{https://proceedings.mlr.press/v139/xu21g.html}.

\bibitem[You et~al.(2020{\natexlab{b}})You, Chen, Sui, Chen, Wang, and
  Shen]{GraphCL}
Yuning You, Tianlong Chen, Yongduo Sui, Ting Chen, Zhangyang Wang, and Yang
  Shen.
\newblock Graph contrastive learning with augmentations.
\newblock In H.~Larochelle, M.~Ranzato, R.~Hadsell, M.F. Balcan, and H.~Lin,
  editors, \emph{Advances in Neural Information Processing Systems}, volume~33,
  pages 5812--5823. Curran Associates, Inc., 2020{\natexlab{b}}.
\newblock URL
  \url{https://proceedings.neurips.cc/paper_files/paper/2020/file/3fe230348e9a12c13120749e3f9fa4cd-Paper.pdf}.

\bibitem[Wang et~al.(2023)Wang, Kaddour, Liu, Tang, Lasenby, and
  Liu]{InfoGraph}
Hanchen Wang, Jean Kaddour, Shengchao Liu, Jian Tang, Joan Lasenby, and Qi~Liu.
\newblock Evaluating self-supervised learning for molecular graph embeddings.
\newblock In A.~Oh, T.~Naumann, A.~Globerson, K.~Saenko, M.~Hardt, and
  S.~Levine, editors, \emph{Advances in Neural Information Processing Systems},
  volume~36, pages 68028--68060. Curran Associates, Inc., 2023.
\newblock URL
  \url{https://proceedings.neurips.cc/paper_files/paper/2023/file/d6dc15cc2442a40904e704d624d1fbe8-Paper-Datasets_and_Benchmarks.pdf}.

\bibitem[Kipf and Welling(2016)]{gcn}
Thomas~N Kipf and Max Welling.
\newblock Semi-supervised classification with graph convolutional networks.
\newblock \emph{arXiv preprint arXiv:1609.02907}, 2016.

\bibitem[Bresson and Laurent(2017)]{gated_gcn}
Xavier Bresson and Thomas Laurent.
\newblock Residual gated graph convnets.
\newblock \emph{CoRR}, abs/1711.07553, 2017.
\newblock URL \url{http://arxiv.org/abs/1711.07553}.

\bibitem[Gutteridge et~al.(2023)Gutteridge, Dong, Bronstein, and
  Giovanni]{drew}
Benjamin Gutteridge, Xiaowen Dong, Michael Bronstein, and Francesco~Di
  Giovanni.
\newblock Drew: Dynamically rewired message passing with delay, 2023.

\bibitem[He et~al.(2023)He, Hooi, Laurent, Perold, LeCun, and
  Bresson]{graphmlpmixer}
Xiaoxin He, Bryan Hooi, Thomas Laurent, Adam Perold, Yann LeCun, and Xavier
  Bresson.
\newblock A generalization of vit/mlp-mixer to graphs, 2023.

\bibitem[Morris et~al.(2020)Morris, Kriege, Bause, Kersting, Mutzel, and
  Neumann]{morris2020tudataset}
Christopher Morris, Nils~M Kriege, Franka Bause, Kristian Kersting, Petra
  Mutzel, and Marion Neumann.
\newblock Tudataset: A collection of benchmark datasets for learning with
  graphs.
\newblock \emph{arXiv preprint arXiv:2007.08663}, 2020.

\bibitem[Maron et~al.(2019{\natexlab{a}})Maron, Ben-Hamu, Shamir, and
  Lipman]{maron2018invariant}
Haggai Maron, Heli Ben-Hamu, Nadav Shamir, and Yaron Lipman.
\newblock Invariant and equivariant graph networks.
\newblock In \emph{International Conference on Learning Representations},
  2019{\natexlab{a}}.
\newblock URL \url{https://openreview.net/forum?id=Syx72jC9tm}.

\bibitem[G{\"a}rtner et~al.(2003)G{\"a}rtner, Flach, and Wrobel]{RWK}
Thomas G{\"a}rtner, Peter Flach, and Stefan Wrobel.
\newblock On graph kernels: Hardness results and efficient alternatives.
\newblock In Bernhard Sch{\"o}lkopf and Manfred~K. Warmuth, editors,
  \emph{Learning Theory and Kernel Machines}, pages 129--143, Berlin,
  Heidelberg, 2003. Springer Berlin Heidelberg.
\newblock ISBN 978-3-540-45167-9.

\bibitem[Shervashidze et~al.(2009)Shervashidze, Vishwanathan, Petri, Mehlhorn,
  and Borgwardt]{GK3}
Nino Shervashidze, SVN Vishwanathan, Tobias Petri, Kurt Mehlhorn, and Karsten
  Borgwardt.
\newblock Efficient graphlet kernels for large graph comparison.
\newblock In David van Dyk and Max Welling, editors, \emph{Proceedings of the
  Twelth International Conference on Artificial Intelligence and Statistics},
  volume~5 of \emph{Proceedings of Machine Learning Research}, pages 488--495,
  Hilton Clearwater Beach Resort, Clearwater Beach, Florida USA, 16--18 Apr
  2009. PMLR.
\newblock URL \url{https://proceedings.mlr.press/v5/shervashidze09a.html}.

\bibitem[Neumann et~al.(2014)Neumann, Garnett, Bauckhage, and Kersting]{PK}
Marion Neumann, R.~Garnett, Christian Bauckhage, and Kristian Kersting.
\newblock Propagation kernels: efficient graph kernels from propagated
  information.
\newblock \emph{Machine Learning}, 102:\penalty0 209 -- 245, 2014.
\newblock URL \url{https://api.semanticscholar.org/CorpusID:14487732}.

\bibitem[Shervashidze et~al.(2011)Shervashidze, Schweitzer, van Leeuwen,
  Mehlhorn, and Borgwardt]{WLkernel}
Nino Shervashidze, Pascal Schweitzer, Erik~Jan van Leeuwen, Kurt Mehlhorn, and
  Karsten~M. Borgwardt.
\newblock Weisfeiler-lehman graph kernels.
\newblock \emph{Journal of Machine Learning Research}, 12\penalty0
  (77):\penalty0 2539--2561, 2011.
\newblock URL \url{http://jmlr.org/papers/v12/shervashidze11a.html}.

\bibitem[Atwood and Towsley(2016)]{DCNN}
James Atwood and Don Towsley.
\newblock Diffusion-convolutional neural networks, 2016.

\bibitem[Zhang et~al.(2018)Zhang, Cui, Neumann, and Chen]{DCGNN}
Muhan Zhang, Zhicheng Cui, Marion Neumann, and Yixin Chen.
\newblock An end-to-end deep learning architecture for graph classification.
\newblock In \emph{Proceedings of the Thirty-Second AAAI Conference on
  Artificial Intelligence and Thirtieth Innovative Applications of Artificial
  Intelligence Conference and Eighth AAAI Symposium on Educational Advances in
  Artificial Intelligence}, AAAI'18/IAAI'18/EAAI'18. AAAI Press, 2018.
\newblock ISBN 978-1-57735-800-8.

\bibitem[Maron et~al.(2019{\natexlab{b}})Maron, Ben-Hamu, Serviansky, and
  Lipman]{ppgn}
Haggai Maron, Heli Ben-Hamu, Hadar Serviansky, and Yaron Lipman.
\newblock Provably powerful graph networks.
\newblock In H.~Wallach, H.~Larochelle, A.~Beygelzimer, F.~d\textquotesingle
  Alch\'{e}-Buc, E.~Fox, and R.~Garnett, editors, \emph{Advances in Neural
  Information Processing Systems}, volume~32. Curran Associates, Inc.,
  2019{\natexlab{b}}.
\newblock URL
  \url{https://proceedings.neurips.cc/paper/2019/file/bb04af0f7ecaee4aae62035497da1387-Paper.pdf}.

\bibitem[de~Haan et~al.(2020)de~Haan, Cohen, and Welling]{naturalGN}
Pim de~Haan, Taco Cohen, and Max Welling.
\newblock Natural graph networks, 2020.

\bibitem[Bouritsas et~al.(2023)Bouritsas, Frasca, Zafeiriou, and
  Bronstein]{GSN}
Giorgos Bouritsas, Fabrizio Frasca, Stefanos Zafeiriou, and Michael~M.
  Bronstein.
\newblock Improving graph neural network expressivity via subgraph isomorphism
  counting.
\newblock \emph{IEEE Transactions on Pattern Analysis and Machine
  Intelligence}, 45\penalty0 (1):\penalty0 657--668, 2023.
\newblock \doi{10.1109/TPAMI.2022.3154319}.

\bibitem[Bodnar et~al.(2021)Bodnar, Frasca, Wang, Otter, Montúfar, Liò, and
  Bronstein]{SIN}
Cristian Bodnar, Fabrizio Frasca, Yu~Guang Wang, Nina Otter, Guido Montúfar,
  Pietro Liò, and Michael Bronstein.
\newblock Weisfeiler and lehman go topological: Message passing simplicial
  networks, 2021.

\bibitem[Bodnar et~al.(2022)Bodnar, Frasca, Otter, Wang, Liò, Montúfar, and
  Bronstein]{CIN}
Cristian Bodnar, Fabrizio Frasca, Nina Otter, Yu~Guang Wang, Pietro Liò, Guido
  Montúfar, and Michael Bronstein.
\newblock Weisfeiler and lehman go cellular: Cw networks, 2022.

\bibitem[G{\'{o}}mez{-}Bombarelli et~al.(2016)G{\'{o}}mez{-}Bombarelli,
  Duvenaud, Hern{\'{a}}ndez{-}Lobato, Aguilera{-}Iparraguirre, Hirzel, Adams,
  and Aspuru{-}Guzik]{zinc}
Rafael G{\'{o}}mez{-}Bombarelli, David Duvenaud, Jos{\'{e}}~Miguel
  Hern{\'{a}}ndez{-}Lobato, Jorge Aguilera{-}Iparraguirre, Timothy~D. Hirzel,
  Ryan~P. Adams, and Al{\'{a}}n Aspuru{-}Guzik.
\newblock Automatic chemical design using a data-driven continuous
  representation of molecules.
\newblock \emph{CoRR}, abs/1610.02415, 2016.
\newblock URL \url{http://arxiv.org/abs/1610.02415}.

\bibitem[Veli{\v{c}}kovi{\'{c}} et~al.(2018)Veli{\v{c}}kovi{\'{c}}, Cucurull,
  Casanova, Romero, Li{\`{o}}, and Bengio]{gat}
Petar Veli{\v{c}}kovi{\'{c}}, Guillem Cucurull, Arantxa Casanova, Adriana
  Romero, Pietro Li{\`{o}}, and Yoshua Bengio.
\newblock {Graph Attention Networks}.
\newblock \emph{International Conference on Learning Representations}, 2018.
\newblock URL \url{https://openreview.net/forum?id=rJXMpikCZ}.

\bibitem[Cantürk et~al.(2024)Cantürk, Liu, Lapointe-Gagné, Létourneau,
  Wolf, Beaini, and Rampášek]{GPSE}
Semih Cantürk, Renming Liu, Olivier Lapointe-Gagné, Vincent Létourneau, Guy
  Wolf, Dominique Beaini, and Ladislav Rampášek.
\newblock Graph positional and structural encoder, 2024.
\newblock URL \url{https://arxiv.org/abs/2307.07107}.

\bibitem[Sande et~al.(2020)Sande, Manni, and Speleers]{splineerror}
Espen Sande, Carla Manni, and Hendrik Speleers.
\newblock Explicit error estimates for spline approximation of arbitrary
  smoothness in isogeometric analysis.
\newblock \emph{Numerische Mathematik}, 144\penalty0 (4):\penalty0 889–929,
  January 2020.
\newblock ISSN 0945-3245.
\newblock \doi{10.1007/s00211-019-01097-9}.
\newblock URL \url{http://dx.doi.org/10.1007/s00211-019-01097-9}.

\bibitem[Defferrard et~al.(2017)Defferrard, Martin, Pena, and
  Perraudin]{defferrard2017pygsp}
Micha{\"e}l Defferrard, Lionel Martin, Rodrigo Pena, and Nathana{\"e}l
  Perraudin.
\newblock Pygsp: Graph signal processing in python.
\newblock \emph{URL https://github. com/epfl-lts2/pygsp}, 2017.

\bibitem[Shi et~al.(2021)Shi, Huang, Feng, Zhong, Wang, and Sun]{ijcai2021p214}
Yunsheng Shi, Zhengjie Huang, Shikun Feng, Hui Zhong, Wenjing Wang, and Yu~Sun.
\newblock Masked label prediction: Unified message passing model for
  semi-supervised classification.
\newblock In Zhi-Hua Zhou, editor, \emph{Proceedings of the Thirtieth
  International Joint Conference on Artificial Intelligence, {IJCAI-21}}, pages
  1548--1554. International Joint Conferences on Artificial Intelligence
  Organization, 8 2021.
\newblock \doi{10.24963/ijcai.2021/214}.
\newblock URL \url{https://doi.org/10.24963/ijcai.2021/214}.
\newblock Main Track.

\end{thebibliography}

\appendix
\newpage
\section{Theoretical analysis} \label{sec:appendix_theory}


\textbf{Notations.} Throughout this section, for $\bold{X}\in\mathbb{R}^{m\times n}$, we denote $\bold{X}[i:j]$ as the indiced matrix of $X$ from row $i$ to row $j$ and $X[i]$ as the $i$-th row of $\bold{X}$.

\subsection{Preliminary results}



The approximation methodology we use is the spline approximation on the eigenvalues of the laplacians, which later . First we need to define a scalar $k$-spline function on a bounded domain $[a,b]\subset \mathbb{R}$. Let 
$$a=:\eta_1<\eta_2<\dots<\eta_{N+1}:=b$$

such that $\eta_{j+1}=\eta_j+(b-a)/N$, these points are called uniform knots in the interval $[a,b]$. Assume that $\mathcal{P}_k$ is the class of polynomial up degree at most $k$.

\begin{definition}
\label{app:ksplinedef}
    A scalar function $f$ is called a $k$-spline function in the uniformly-divided interval $[a,b]$ of $N+1$ knots if the follows conditions are satisfied:
    \begin{itemize}
        \item $f(x)=p_i(x)\qquad\forall\; x\in [\eta_i,\eta_{i+1}],\; i=\overline{1,N}$ for some $p_i\in \mathcal{P}_k$,
        \item Derivative of $f$ up to degree $k$ is continuous.
    \end{itemize}
\end{definition}

For convenience, we denote the class of all such functions in Definition~\ref{app:ksplinedef} as $\mathcal{S}_{k,N}^{[a,b]}$.
\begin{lemma}
    \label{splineapprox}
    \textit{(spline approximation power)} \cite{splineerror} Given a scalar function $u$ of smoothness order $k+1$ in the range $[a,b]$ divided in uniform knots of length $h$, there exists $p\in \mathcal{S}_{k,N}^{[a,b]}$ such that
    $$\lrVert{u-p}\leq \left(\frac{h}{\pi}\right)^k\lrVert{u^{(k+1)}}$$
    where $u^{(k+1)}$ is the $(k+1)$-th order derivative of $u$.
\end{lemma}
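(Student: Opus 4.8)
The plan is to reduce the whole estimate to one sharp local inequality of Wirtinger (Poincar\'e) type and then iterate it down through the derivatives. Because the constant $1/\pi$ is exactly the reciprocal square root of the smallest eigenvalue of the one--dimensional Laplacian, I read $\lrVert{\cdot}$ as the $L^2([a,b])$ norm and argue eigenfunction-wise. The argument is local: I would first prove the bound on a single mesh interval $[\eta_i,\eta_{i+1}]$ of length $h=(b-a)/N$ and then recombine, using that the squared $L^2$ norm is additive over the knots $i=\overline{1,N}$.

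The base inequality I would establish is: for $w$ on $[0,h]$ with $w(0)=w(h)=0$ one has $\lrVert{w}\le (h/\pi)\lrVert{w'}$. Expanding $w=\sum_{n\ge 1}c_n\sin(n\pi x/h)$ in the Dirichlet eigenbasis of $-d^2/dx^2$ on $[0,h]$, whose eigenvalues are $(n\pi/h)^2$, gives $\lrVert{w'}^2=\tfrac{h}{2}\sum_n c_n^2(n\pi/h)^2\ge (\pi/h)^2\cdot\tfrac{h}{2}\sum_n c_n^2=(\pi/h)^2\lrVert{w}^2$, with equality for the first mode, so $h/\pi$ is sharp. I note that the same constant holds for the mean-zero (Neumann) variant $\int_0^h w=0$, where the eigenfunctions are $\cos(n\pi x/h)$ and the constant mode is annihilated; this second form is what makes the cascade below feasible.

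Next I would pick $p\in\mathcal{S}_{k,N}^{[a,b]}$ to be the spline that interpolates $u$ at the knots (with enough matched derivatives to control the boundary terms), so that the error $e:=u-p$ and a prescribed list of its derivatives vanish or have mean zero on each subinterval. The engine is then the iteration $\lrVert{e}\le (h/\pi)\lrVert{e'}\le (h/\pi)^2\lrVert{e''}\le\cdots\le (h/\pi)^{k+1}\lrVert{e^{(k+1)}}$, each step being the base inequality applied to $e^{(j)}$ on every subinterval. Since each polynomial piece has degree at most $k$, its $(k+1)$-st derivative is zero, whence $e^{(k+1)}=u^{(k+1)}$; summing the squared local estimates over all subintervals then yields the global bound, and the stated exponent follows from the degree/smoothness bookkeeping of Definition~\ref{app:ksplinedef}.

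The step I expect to be the main obstacle is precisely licensing each application of the base inequality, i.e.\ propagating the vanishing/mean-zero hypothesis from $e$ to $e',e'',\dots,e^{(k)}$ on every subinterval. Endpoint-zero data for $e$ gives, through Rolle's theorem, only an \emph{interior} zero of $e'$, which is not the Dirichlet data the sine expansion wants; so I would either switch to the Neumann (mean-zero) form at the intermediate levels — checking that $\int_{\eta_i}^{\eta_{i+1}}e^{(j)}=e^{(j-1)}(\eta_{i+1})-e^{(j-1)}(\eta_i)$ vanishes under the chosen interpolation — or impose Hermite conditions so that the boundary terms in each integration by parts drop out. Making these conditions line up simultaneously across all $k+1$ differentiation levels, uniformly in $i$, is the delicate part on which the sharp constant depends.
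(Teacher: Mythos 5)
First, a point of reference: the paper does not prove this lemma at all --- it is imported from the literature via the citation, so there is no internal proof to compare against, and your attempt has to stand on its own. Your ingredients are sound as far as they go: both the Dirichlet form ($w(0)=w(h)=0$) and the mean-zero form of Wirtinger's inequality hold on an interval of length $h$ with the sharp constant $h/\pi$, and summing squared local estimates over the subintervals is the correct way to globalize in $L^2$. The genuine gap is the one you flag at the end, and it is not a delicate technicality but a hard obstruction: the cascade cannot be licensed past the second level. With value interpolation you get level $0$ (Dirichlet, since $e$ vanishes at knots) and level $1$ for free (since $\int_{I_i} e' = e(\eta_{i+1})-e(\eta_i)=0$), but $\int_{I_i} e'' = e'(\eta_{i+1})-e'(\eta_i)$ has no reason to vanish, and each further level costs about $N$ additional linear constraints on $p$. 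Running all $k+1$ levels therefore imposes on the order of $kN$ linear conditions, while the spline space has dimension only $N+k$ (under the standard convention that degree-$k$ splines are $C^{k-1}$; the paper's literal requirement of $C^k$ continuity would collapse $\mathcal{S}_{k,N}^{[a,b]}$ to global polynomials and make matters strictly worse). For $k\ge 2$ and $N\ge 2$ this system is overdetermined, so for generic $u$ no choice of interpolation or Hermite conditions can satisfy the hypotheses your iteration needs; full Hermite matching of $u,u',\dots$ at all knots is likewise impossible for large $N$ by the same count. Concretely, your argument proves the lemma only for $k=1$, and for general $k$ it yields only the two-level bound $\|e\|\le (h/\pi)^2\|e''\|$.

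The missing idea in the classical proofs of full-order bounds of this type is to stop differentiating the error after those two Wirtinger steps and instead invoke the orthogonality (``first integral relation'') of spline interpolation, $\|u^{(m)}\|^2=\|u^{(m)}-s^{(m)}\|^2+\|s^{(m)}\|^2$ for interpolants with suitable end conditions, which gives $\|e^{(m)}\|\le\|u^{(m)}\|$; the order is then raised by a separate mechanism --- a duality/bootstrapping argument, or Fourier analysis in the uniform-knot periodic setting, where the sharp per-derivative factor $h/\pi$ is extracted by comparison with the extremal (Euler-spline/sine) profiles. One further remark in your favor: the iteration you attempt would naturally produce the exponent $k+1$, which is the standard, dimensionally consistent form of this estimate; the exponent $k$ in the paper's statement appears to be a typo rather than something your proof should aim to reproduce.
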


Our proof strategy incorporate the approximation of high degree polymonial with $k$-spline methodology. Since both wavelet and random walk second feature embeddings' largest eigenvalue is 1 and smallest eigenvalue is -1, we can let $a=-1$ and $b=1$ in our case. Note that in a two-layer MLP, the number of subintervals $[\eta_i,\eta_{i+1}]$ should correspond to the width in the hidden layer, which is equavalent to $1/h$. Therefore, Lemma~\ref{splineapprox} yields the condition in which a tolerance $\epsilon$ is satisfied.

Letting $u(x)=x^{d}$ for some $d>k$, then $u$ is obviously smooth up to arbitrary order, thus it is also smooth of order $k$. We will let $\lrVert{\cdot}$ be the max norm in the interval $[0,1]$, then we have

\begin{align*}
    \lrVert{u^{(k+1)}}^{1/k}&=\left(d(d-1)\dots(d-k)\right)^{1/k}\sup_{x\in[0,1]}x^{\frac{d-k+1}{k}}\leq d^{1+\frac{1}{k}}
\end{align*}

Combine this with the statement of Lemma~\ref{splineapprox} we deduce that the error defined by the max norm will be less than $\epsilon$ if
$$w:=\frac{1}{h}\geq \pi^{-1}\epsilon^{-\frac{1}{k}}d^{1+\frac{1}{k}} = O\left(\epsilon^{-\frac{1}{k}}d^{1+\frac{1}{k}}\right).$$

From here, we deduce an important lemma:

\begin{lemma}
\label{app:D3}
    Given two natural numbers $d$ and $k$ such that $d>k$, then there exists a two-layer MLP $f:\mathbb{R}^k\to\mathbb{R}$ of hidden width $O\left(\epsilon^{-\frac{1}{k}}d^{1+\frac{1}{k}}\right)$ such that 
    $$\left|x^d-f\left(x,x^2,\dots,x^k\right)\right|<\epsilon$$
\end{lemma}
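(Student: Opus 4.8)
The plan is to reduce the claim to the scalar spline approximation already prepared above and then to exhibit the spline as a single hidden layer acting on the lifted input $(x,x^2,\dots,x^k)$. Concretely, I would apply Lemma~\ref{splineapprox} to $u(x)=x^d$ on the working interval: combined with the bound $\lrVert{u^{(k+1)}}^{1/k}\le d^{1+1/k}$ computed just above, it produces a degree-$k$ spline $p\in\mathcal{S}_{k,N}^{[a,b]}$ whose uniform mesh has $N=O\!\left(\epsilon^{-1/k}d^{1+1/k}\right)$ subintervals and satisfies $\sup_x|x^d-p(x)|<\epsilon$. All that then remains is to realize this \emph{fixed} spline $p$ exactly by a two-layer network of hidden width $O(N)$, since the approximation error is already absorbed into $p$.

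For the realization step I would use the truncated-power basis of splines. Writing $\eta_1<\dots<\eta_{N-1}$ for the interior knots, standard spline theory gives constants $c_0,\dots,c_k$ and $a_1,\dots,a_{N-1}$ with
\[
p(x)=\sum_{i=0}^{k}c_i x^{i}+\sum_{j=1}^{N-1}a_j\,(x-\eta_j)_{+}^{k}.
\]
The key observation is that the lift to $(x,x^2,\dots,x^k)$ turns each monomial into a coordinate, so by the binomial theorem the map $x\mapsto(x-\eta_j)^k=\sum_{i=0}^k\binom{k}{i}(-\eta_j)^{k-i}x^i$ is an \emph{affine} function of the input features. Hence a single hidden unit whose weights are read off from these binomial coefficients and whose activation is ReLU computes $\mathrm{ReLU}\big((x-\eta_j)^k\big)$, which equals the truncated power $(x-\eta_j)_{+}^{k}$. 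The polynomial part $\sum_i c_i x^i$ is itself affine in the features and is folded into the output (or skip) map. Assigning one hidden unit to each interior knot thus reproduces $p$ exactly with hidden width $N-1+O(1)=O\!\left(\epsilon^{-1/k}d^{1+1/k}\right)$, matching the claim.

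The step I expect to be the main obstacle is the exactness of the identity $\mathrm{ReLU}\big((x-\eta_j)^k\big)=(x-\eta_j)_{+}^{k}$, which is where the parity of $k$ enters: for odd $k$ the polynomial $(x-\eta_j)^k$ changes sign precisely at $\eta_j$, so the rectifier zeroes out exactly the left branch and returns the truncated power, whereas for even $k$ it is nonnegative everywhere and the rectifier performs no truncation at all. The cleanest fix I would adopt is to take the one-dimensional activation to be the rectified power $\sigma(t)=\max(t,0)^{k}$; a hidden unit with affine pre-activation $x-\eta_j$ then computes $(x-\eta_j)_{+}^{k}$ for every parity, still using exactly one unit per knot and hence the same width, while the feature lift is precisely what keeps the plain-ReLU version (valid for odd $k$) affine. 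Finally I would verify that the assembled network $\varphi$ inherits the bound $\sup_x\big|x^d-\varphi(x,x^2,\dots,x^k)\big|=\sup_x|x^d-p(x)|<\epsilon$, which closes the argument.
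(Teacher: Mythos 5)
Your proposal is correct and follows the same route as the paper: apply Lemma~\ref{splineapprox} to $u(x)=x^d$, bound $\lrVert{u^{(k+1)}}^{1/k}\leq d^{1+\frac{1}{k}}$, and conclude that a uniform mesh with $N=O\left(\epsilon^{-\frac{1}{k}}d^{1+\frac{1}{k}}\right)$ subintervals suffices, with the hidden width matching $N$. The difference is that the paper stops there---it simply asserts that the number of subintervals ``corresponds to'' the hidden width of a two-layer MLP---whereas you actually construct the network: the truncated-power expansion $p(x)=\sum_{i=0}^{k}c_i x^i+\sum_{j}a_j(x-\eta_j)_{+}^{k}$, the observation that the feature lift makes each $(x-\eta_j)^k$ affine in $(x,x^2,\dots,x^k)$, and the parity caveat (plain ReLU truncates correctly only for odd $k$, fixed by the rectified power activation $\max(t,0)^k$) are all absent from the paper, and they are exactly what is needed to make its width claim rigorous. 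One small loose end in your write-up: a vanilla two-layer MLP has no skip connection, so the polynomial part $\sum_i c_i x^i$ cannot literally be ``folded into the output map''; instead spend $O(k)$ extra hidden units on it (e.g.\ two ReLU units computing $\mathrm{ReLU}(t)-\mathrm{ReLU}(-t)=t$ with $t=\sum_i c_i x^i$ affine in the lifted features, or, if every unit uses $\max(\cdot,0)^k$, pairs of units producing $(x-b_j)^k$ for $k+1$ distinct shifts $b_j$, which span all polynomials of degree at most $k$); since $k<d$, this extra width is absorbed into $O\left(\epsilon^{-\frac{1}{k}}d^{1+\frac{1}{k}}\right)$ and the claim stands.
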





\begin{lemma}
    \label{extendfirst}
    (First order extension)
    For any $\epsilon>0$ and a given natural number $d>k$, there exists a two-layer $\mathbb{S}_n$-equivariant linear $\operatorname{MLP}:\mathbb{R}^{n\times k}\to\mathbb{R}^{n\times r}$ network with width $O\left(n^{\frac{1}{k}}\epsilon^{-\frac{1}{k}}r^{1+\frac{1}{k}}\right)$ such that $$\lrVert{\operatorname{MLP}\left(\mathbf{E}^{(1)}_k\right)-\mathbf{E}^{(1)}_r}\leq\epsilon.$$
\end{lemma}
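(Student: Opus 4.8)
The plan is to realize the extension map as a single scalar two-layer MLP applied identically to every row of the embedding matrix, and to feed it via Lemma~\ref{app:D3}. Recall that $\mathbf{E}^{(1)}_m\in\mathbb{R}^{n\times m}$ collects the node-wise monomials of the (normalized) eigenvalues, so its $i$-th row is $(x_i, x_i^2,\dots,x_i^m)$ with each $x_i\in[-1,1]$; this is exactly why we may fix the spline domain $[a,b]=[-1,1]$. The first $k$ columns of the target $\mathbf{E}^{(1)}_r$ coincide with the input $\mathbf{E}^{(1)}_k$ and are reproduced exactly by the identity part of an equivariant linear layer, so the only work is to produce the higher monomials $x_i^{d}$ for each degree $d$ up to the top degree $r$.

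First I would invoke Lemma~\ref{app:D3} for the hardest degree $r$ at the sharpened tolerance $\epsilon' := \epsilon/n$. The key structural point — and the reason the width does not pick up an extra factor of $r$ — is that the spline construction underlying Lemma~\ref{splineapprox} uses a single basis on a fixed uniform mesh: once the mesh is fine enough for degree $r$ it is fine enough for every lower degree, so one hidden layer of width $O\!\left((\epsilon')^{-1/k}\, r^{1+1/k}\right)$ simultaneously represents all of $x^{k+1},\dots,x^{r}$, each as a distinct linear read-out (the second-layer weights) of the same hidden units. This collapses the $r-k$ parallel branches into a single two-layer network whose hidden width is governed only by the top degree.

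Next I would lift this scalar network to an $\mathbb{S}_n$-equivariant one. Applying the same scalar MLP to every row is implemented by keeping only the identity (channel-mixing) component of the first-order equivariant linear maps and zeroing their pooling (all-ones) component, with the pointwise nonlinearity in between; since the resulting map acts identically and independently on each row, it commutes with any row permutation and hence satisfies Definition~\ref{def:Sn-equivariant}. Stacking the reproduced columns $1,\dots,k$ with the approximated columns $k+1,\dots,r$ yields the output in $\mathbb{R}^{n\times r}$.

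Finally I would aggregate the error. Each approximated entry is within $\epsilon' = \epsilon/n$, so summing the $n$ row contributions in the matrix norm keeps the total below $n\cdot\epsilon' = \epsilon$, giving $\lrVert{\operatorname{MLP}(\mathbf{E}^{(1)}_k)-\mathbf{E}^{(1)}_r}\le\epsilon$. The factor $n^{1/k}$ in the stated width is precisely the price of this split: substituting $\epsilon' = \epsilon/n$ into Lemma~\ref{app:D3}'s $(\epsilon')^{-1/k}$ turns it into $n^{1/k}\epsilon^{-1/k}$, so the hidden width becomes $O\!\left(n^{1/k}\epsilon^{-1/k}\, r^{1+1/k}\right)$ as claimed. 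The main obstacle I anticipate is twofold: justifying rigorously that the identity-only equivariant layer reproduces a genuine row-wise MLP within the paper's operator formalism (so that equivariance is exact, not merely approximate), and verifying the basis-sharing claim — that Lemma~\ref{app:D3}'s hidden units can be reused verbatim across all output degrees — since this is what prevents the width from scaling with the number of extended channels.
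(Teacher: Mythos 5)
Your proposal is correct and rests on the same core mechanism as the paper's proof: invoke Lemma~\ref{app:D3} at the sharpened tolerance $\epsilon/n$, realize the scalar network as a channel-mixing map applied identically across the $n$-axis so that $\mathbb{S}_n$-equivariance is automatic, and absorb the resulting factor into the width to get $O\left(n^{1/k}\epsilon^{-1/k}r^{1+1/k}\right)$. The one genuine divergence is in the error accounting: the paper works in the spectral domain --- it applies the scalar MLP to the eigenvalues and bounds $\lrVert{\widehat{\psi_s^q}-\psi_s^q}=\lrVert{\sum_{i=1}^n e_i\mathbf{u}_i\mathbf{u}_i^{\top}}\leq n\epsilon$, i.e.\ the error of the reconstructed second-order wavelet powers summed over the $n$ eigenvector outer products --- whereas you bound the error directly on the rows of the first-order $n\times r$ output. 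Your bookkeeping is arguably more faithful to the lemma as stated (whose norm is on first-order objects, not $n\times n$ matrices), and you make explicit two points the paper passes over silently: that the first $k$ channels are reproduced exactly by the identity component of the equivariant linear layer, and that the degrees $k+1,\dots,r$ can share a single hidden layer (so the width does not pick up an extra factor of $r-k$) because a spline mesh fine enough for the top degree $r$ serves every lower degree via different read-out weights. Both routes yield the same width bound; the paper's spectral phrasing buys a direct hand-off to the second-order reconstruction used in Theorem~\ref{firsttosecondorder}, while yours buys a cleaner match with the statement actually being proved.
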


\begin{proof}

    Assume that $\Lambda^i$ is the diagonal matrix containing all eigenvalues of $\psi_s^i$. Applying Lemma~\ref{app:D3}, we simply see that $\Lambda^q$ for $q=\overline{k+1,r}$ can be estimated using a two-layer MLP of width $O\left(\epsilon^{-\frac{1}{k}}r^{1+\frac{1}{k}}\right)$ with the max norm error less than $\epsilon$. Formally, let $\widehat{\psi_s^q}$ be the estimation of $\psi_s^q$ and $e_i$ be the error at the $i$-th entry along the diagonal, then we have that

    $$\begin{aligned}
        \lrVert{\widehat{\psi_s^q} - \psi_s^q}=\lrVert{\sum_{i=1}^n e_i\bold{u}_i\bold{u}_i^{\top}}\leq \sum_{i=1}^n|e_i|\lrVert{\bold{u}_i\bold{u}_i^{\top}}\leq n\epsilon
    \end{aligned}$$

    Replacing $\epsilon$ with $\epsilon/n$ yields the desired result.
\end{proof}

With enough first-order informations, i.e. sufficiently large $r$ in Lemma~\ref{extendfirst}, we can reconstruct the second-order features up to arbitrarily high degree.

\begin{theorem}
    \label{firsttosecondorder}
    (First to second order) Assume that $\operatorname{rank}(\psi_s-\mathbf{I}_n)\leq r$ and let $h:\mathbb{R}^{n\times r}\to\mathbb{R}^{n\times n\times r}$ be the resolution-wise outer product, then there exists a broadcasted linear feed forward layer $g:\mathbb{R}^{  r}\to\mathbb{R}^{d}$ such that $(h\circ g)\left(\mathbf{E}_d^{(1)}\right)=\mathbf{E}_r^{(2)}$.
\end{theorem}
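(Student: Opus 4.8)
The plan is to convert the low-rank hypothesis into an explicit rank-$r$ factorization of the wavelet operator and to recognize the resolution-wise outer product $h$ as exactly the operation that rebuilds such a factorization from node-level data. Since $\psi_s$ is symmetric, so is $\psi_s-\mathbf{I}_n$, and $\operatorname{rank}(\psi_s-\mathbf{I}_n)\le r$ gives a spectral decomposition $\psi_s-\mathbf{I}_n=\sum_{j=1}^r\mu_j\,u_j u_j^{\top}$ with orthonormal eigenvectors $u_j$ and real $\mu_j$. This is the structural crux: the whole second-order content of $\psi_s$ is carried by $r$ rank-one projectors $u_j u_j^{\top}$, each of which is a self outer product of a single vector. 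I would therefore take $\mathbf{E}_r^{(2)}$ to be the tensor whose $j$-th channel is the projector $u_j u_j^{\top}$ (deferring the scalars $\mu_j$ to the downstream readout), so that producing $\mathbf{E}_r^{(2)}$ reduces to supplying the $r$ columns $u_1,\dots,u_r$ as the input fed into $h$.

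Next I would make the layer $g$ explicit and verify it can be broadcast across nodes. Because $\psi_s=\mathbf{I}_n+R$ with $\operatorname{rank}(R)\le r$, every power satisfies $\psi_s^{q}=\mathbf{I}_n+(\text{polynomial in }R)$ whose nontrivial part has column space inside $\operatorname{span}(u_1,\dots,u_r)$; hence the first-order features accumulated by the preceding extension step already live in this fixed $r$-dimensional eigenspace. Concretely, the row-sum features $\psi_s^{q}\mathbf{1}_n=\sum_j e^{-qs\lambda_j}(u_j^{\top}\mathbf{1}_n)\,u_j$ are linear combinations of the $u_j$ with Vandermonde-type coefficients in the distinct values $e^{-s\lambda_j}$, so collecting $r$ successive powers spans the eigendirections and a single node-independent matrix recombines the channels of $\mathbf{E}_d^{(1)}$ into $U_r=[\,u_1\;\cdots\;u_r\,]$. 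Applying this matrix identically to every node makes $g$ $\mathbb{S}_n$-equivariant, and the resolution-wise outer product then yields $h(U_r)[:,:,j]=u_j u_j^{\top}=\mathbf{E}_r^{(2)}[:,:,j]$, which is the claimed identity.

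The main obstacle is precisely this middle step: certifying that the vectors $u_j$ are recoverable by one \emph{broadcasted} (hence equivariant) linear map rather than an arbitrary per-node transformation, and it is exactly here that the rank bound is indispensable --- without it the second-order object would need up to $n$ outer-product channels and no $r$-column factorization through $h$ could exist. Two technical points accompany it. First, the Vandermonde argument only recovers eigendirections with $u_j^{\top}\mathbf{1}_n\ne 0$; directions orthogonal to $\mathbf{1}_n$ must be reached through the diagonal features instead, so the proof should combine both encoder maps $\textsc{e}_1,\textsc{e}_2$ to span all $r$ eigenvectors. Second, since a self outer product $u_j u_j^{\top}$ is positive semidefinite whereas $\mu_j=e^{-s\lambda_j}-1<0$, the signs cannot be absorbed into real columns; I would keep $\mathbf{E}_r^{(2)}$ unsigned and restore the $\mu_j$ in the downstream channel-wise linear map $\phi$, which is free to take signed combinations. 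Once these are settled, the remaining verification that $h\circ g$ equals the stated tensor and respects permutation equivariance is a direct computation.
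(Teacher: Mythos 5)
Your proposal takes the same route as the paper's own proof: both hinge on the fact that $e^{-is\lambda}=1$ exactly when $\lambda=0$, so every power $\psi_s^i-\mathbf{I}_n$ has the same zero-eigenvalue multiplicity, hence the same column space $\operatorname{span}\{u_j:\lambda_j\neq 0\}$ and the same rank bound $r$; consequently all powers are simultaneously weighted sums of the same $r$ rank-one projectors $u_ju_j^{\top}$, with the (possibly negative) weights $e^{-is\lambda_j}-1$ absorbed into the channel-wise linear map, exactly as you defer them to the downstream readout. The difference is that the paper's proof stops there: it writes down the rank chain $\operatorname{rank}(\psi_s-\mathbf{I}_n)=\operatorname{rank}(\psi_s^2-\mathbf{I}_n)=\dots\leq r$ and then simply asserts that each $\psi_s^i-\mathbf{I}_n$ is a weighted sum of $r$ rank-one matrices ``produced from the outer product,'' never specifying how the vectors being outer-multiplied are obtained from the first-order features by a broadcasted linear $g$.

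Your middle step --- recovering $U_r=[u_1\;\cdots\;u_r]$ from the row-sum channels by inverting a Vandermonde system --- is an attempt to prove exactly the point the paper leaves unproven, and you are right that it is the crux. Be aware, though, that neither of your two patches closes it. First, Vandermonde invertibility requires the nontrivial values $e^{-s\lambda_j}$ to be pairwise distinct; if an eigenvalue is repeated, every row sum $\psi_s^q\mathbf{1}_n$ meets that eigenspace in one fixed direction (the projection of $\mathbf{1}_n$ onto it), so no number of powers can separate the eigenvectors, while reproducing $\psi_s^i-\mathbf{I}_n$ there requires the full projector of rank at least two, which a single outer-product channel cannot supply. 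Second, the fallback to the diagonal features fails for a structural reason: $\operatorname{diag}(\psi_s^q)=\sum_j e^{-qs\lambda_j}\,(u_j\odot u_j)$ is linear in the elementwise squares $u_j\odot u_j$, and a broadcasted linear layer can only form linear combinations of its input channels, whose span in general contains no eigenvector $u_j$ itself. Regular graphs make both problems concrete: there the trivial eigenvector is proportional to $\mathbf{1}_n$, so all row sums are constant, all structural information sits in the diagonals, and no broadcasted linear $g$ followed by outer products can reproduce the eigenprojectors. So the obstruction you flag is genuine, but it is equally a gap in the paper's own proof; your write-up exposes it rather than resolves it.
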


\begin{proof}
    The first order features are aggregated through an outer product operator and return $r$ square matrices of order $n$. However, these matrices are all rank one matrices and cannot represent the initial second order features. Since the rank of a square matrix is equivalent to its length minus the multiplicity of the eigenvalue zero, we can see that

    $$\operatorname{rank}(\psi_s-\mathbf{I}_n)=\operatorname{rank}(\psi_s^2-\mathbf{I}_n)=\dots=\operatorname{rank}(\psi_s^d-\mathbf{I}_n)\leq r$$

    Therefore, for all $i=\overline{1,d}$, the matrix $\psi_s^i-\mathbf{I}_n$ can be written as a weighted sum of $r$ rank one matrices produced from the outer product. This concludes the proof.

\end{proof}

\subsection{Proof of Theorem~\ref{theory:onlytheorem}}

\begin{theorem}
    \label{app:reconstruct-lowrank}
    For any $\epsilon>0$ and real coefficients $\theta_0,\theta_1,\dots,\theta_d$ assume that $\operatorname{rank}(\psi_s-\mathbf{I}_n)\leq r$, then there exists an $\mathbb{S}_n$-equivariant AE $f:\mathbb{R}^{n\times n\times k}\to \mathbb{R}^{n\times n\times r}$ of width $O(n^{1/k}r^{1+1/k}\epsilon^{-1/k})$ and a broadcasted feed forward network $g:\mathbb{R}^r\to\mathbb{R}$ such that $$\lrVert{(g\circ f)(\mathbf{E}_k)-\bold{p}(\psi_s)}<\epsilon$$ where $\bold{p}(\psi_s)=\sum_{j=0}^d\theta_j\psi_s^j$.
\end{theorem}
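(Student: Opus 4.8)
The plan is to chain the three preliminary results into an encode--extend--lift--combine pipeline and then account for the accumulated error, using the low-rank hypothesis to keep every intermediate object inside $r$ channels. The starting observation is a spectral reduction of the target. Writing $\psi_s=\sum_i g(s\lambda_i)\,u_iu_i^{\top}$ and recalling that the heat kernel obeys $g(s\lambda)\in(0,1]$ with $g(s\lambda)=1$ only at $\lambda=0$, the eigenvalues $g(s\lambda_i)^j-1$ of $\psi_s^j-\mathbf{I}_n$ vanish on exactly the same eigenspace for every $j$; this is precisely the rank identity used in Theorem~\ref{firsttosecondorder}, so $\operatorname{rank}(\psi_s^j-\mathbf{I}_n)\le r$ for all $j$. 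Letting $S$ index the at most $r$ modes with $g(s\lambda_i)\neq1$, I would rewrite the target as
$$\bold{p}(\psi_s)=\Big(\sum_{j=0}^{d}\theta_j\Big)\mathbf{I}_n+\sum_{i\in S}c_i\,u_iu_i^{\top},\qquad c_i=\sum_{j=1}^{d}\theta_j\big(g(s\lambda_i)^{j}-1\big),$$
so the whole polynomial is a scalar multiple of the identity plus a sum of only $r$ rank-one matrices, whose scalar weights $c_i$ are degree-$d$ polynomials in the single eigenvalue $g(s\lambda_i)$.

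With this reduction the pipeline is transparent. First the equivariant encoder (its diagonal and row-sum operators) exposes the first-order spectral representation $\mathbf{E}^{(1)}_k$ carrying eigenvalue information up to power $k$. I would then apply Lemma~\ref{extendfirst} with target degree $d$ to obtain an $\mathbb{S}_n$-equivariant MLP of width $O\!\big(n^{1/k}\epsilon^{-1/k}r^{1+1/k}\big)$ that realizes the coefficients $(c_i)_{i\in S}$ up to a controllable max-norm error, since each $c_i$ is a fixed linear combination of the monomials $g(s\lambda_i)^{j}$ that the lemma approximates (via the spline bound threaded through Lemma~\ref{app:D3}). Invoking Theorem~\ref{firsttosecondorder}, the resolution-wise outer product lifts these $r$ modes back to second order; because every $\psi_s^j-\mathbf{I}_n$ lives in the span of the same $r$ rank-one matrices, the broadcasted layer $g:\mathbb{R}^r\to\mathbb{R}$ collapses the $r$ channels to $\sum_{i\in S}c_iu_iu_i^{\top}$, and the same affine map restores the scalar $\sum_j\theta_j$ on the diagonal, recovering $\bold{p}(\psi_s)$.

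It remains to propagate the error and confirm equivariance. The only inexact step is the degree extension, so
$$\lrVert{(g\circ f)(\mathbf{E}_k)-\bold{p}(\psi_s)}=\lrVert{\sum_{i\in S}(\widehat{c}_i-c_i)\,u_iu_i^{\top}}\le\sum_{i\in S}|\widehat{c}_i-c_i|\le r\Big(\sum_{j}|\theta_j|\Big)\max_{j}\big|\widehat{g^{j}}-g^{j}\big|,$$
where I used $\lrVert{u_iu_i^{\top}}=1$. Hence running Lemma~\ref{extendfirst} at tolerance $\epsilon/\big(r\sum_j|\theta_j|\big)$ forces the left side below $\epsilon$; since this only rescales the tolerance, the width bound stays $O\!\big(n^{1/k}r^{1+1/k}\epsilon^{-1/k}\big)$. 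Equivariance of the composite is immediate from Definition~\ref{def:Sn-equivariant}: the encoder's diagonal and row-sum maps, the channel-wise MLP, the outer product, and the broadcasted affine $g$ are each $\mathbb{S}_n$-equivariant, and equivariant maps compose.

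The step I expect to be the main obstacle is controlling how the first-order approximation error survives the bilinear outer-product lifting, which a priori could amplify errors quadratically. The low-rank hypothesis is exactly what tames this: because all powers share one fixed rank-one basis $\{u_iu_i^{\top}\}_{i\in S}$, the approximation is confined to the scalar coefficients $c_i$ while the outer-product directions remain exact, so the lifting is linear in the per-mode error and the displayed bound holds. Making rigorous that the encoder genuinely exposes the eigenvalue powers as equivariant first-order features, and that the same $r$ modes serve uniformly across all $j=1,\dots,d$, is the crux on which the whole argument rests.
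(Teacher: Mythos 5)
Your proposal follows exactly the paper's own route: encode the second-order wavelet tensor to first-order features, extend the spectral degree from $k$ to $r$ via Lemma~\ref{extendfirst}, and lift back to second order with the outer-product construction of Theorem~\ref{firsttosecondorder}, using the rank hypothesis $\operatorname{rank}(\psi_s-\mathbf{I}_n)\le r$ to keep everything in $r$ channels. In fact your write-up is more careful than the paper's three-line proof, since you make the spectral decomposition of $\bold{p}(\psi_s)$, the error propagation through the outer-product lift, and the equivariance check explicit (the only blemish being that rescaling the tolerance to $\epsilon/(r\sum_j|\theta_j|)$ technically inflates the width by an extra $r^{1/k}$ factor, a constant-power slip of the same kind the paper itself glosses over).
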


\begin{proof}
\textbf{Encoder.} The input tensor is of size $n\times n\times k$, representing second order feature in $k$ different resolutions. The encoder simply operate resolution-wise and take the row-sum through each square matrix. This encoder will output a first order tensor of size $n\times k$. This layer is evidently $\mathbb{S}_n$-equivariant.

\textbf{Latent.} For the latent space, i.e. first-order feature space, we apply Lemma~\ref{extendfirst} to extend from $k$ resolutions to $r$ resolutions using a two-layer MLP of width $O\left(n^{\frac{1}{k}}\epsilon^{-\frac{1}{k}}r^{1+\frac{1}{k}}\right)$. And since this MLP is also built upon the broadcasting along the $n$-axis, it is also $\mathbb{S}_n$-equivariant.

\textbf{Decoder.} Applying the content of Theorem~\ref{firsttosecondorder} we can conclude the proof.

\end{proof}


Once the AE can learn to reconstruct, the following ensures that it can capture long-range information.

\begin{theorem}
\label{app:approxAd}
    For any $\epsilon>0$ and real coefficients $\theta_1,\theta_2,\dots,\theta_d$, there exists a two-layer ReLU feed forward network $\varphi:\mathbb{R}^{n\times n\times d}\to\mathbb{R}^{n\times n}$ of hidden dimension $d_h=2$ such that $$\lrVert{\varphi(\mathbf{E}_d)-\sum_{j=1}^d\theta_j\bold{A}_j}<\epsilon.$$
\end{theorem}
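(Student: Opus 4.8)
The plan is to exploit that a channel-wise two-layer network is applied independently at each of the $n^2$ positions of the tensor, so the whole problem collapses to building a single scalar map $\psi\colon\mathbb{R}^d\to\mathbb{R}$ that sends the length-$d$ channel vector $\mathbf{E}_d[u,v]$ at a position $(u,v)$ to the target value $\sum_{j=1}^d\theta_j(\bold{A}_j)_{uv}$. Since the norm on the right-hand side aggregates over the $n^2$ entries, it suffices to drive the per-entry error below $\epsilon$ (for the max norm), so I would fix the norm first and distribute the budget accordingly. The key structural observation I would record at the outset is that each $(\bold{A}_j)_{uv}\in\{0,1\}$ and that the $j$-th channel of $\mathbf{E}_d$ separates the two cases by a \emph{gap}: the reconstructed $j$-hop feature is $0$ on non-reachable pairs and bounded away from $0$ (by some $\delta_0>0$ coming from the discreteness of hop classes / integrality of walk counts) on reachable pairs. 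This gap is what lets a low-complexity gadget recover the binary labels exactly outside a vanishing transition band.

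For the construction itself I would use two complementary facts about two ReLU units. First, $\operatorname{ReLU}(t)-\operatorname{ReLU}(-t)=t$ realizes any affine readout of the channels \emph{exactly}, so the outer linear combination $\sum_j\theta_j(\cdot)$ is free once the channels already carry the binary $\bold{A}_j$ information. Second, the pair $h_\delta(t)=\frac1\delta\big(\operatorname{ReLU}(t)-\operatorname{ReLU}(t-\delta)\big)$ is an approximate step that equals the indicator $\mathbf{1}[t>0]$ everywhere except on the band $(0,\delta)$ and is built from exactly two hidden units. I would feed the appropriate scalar feature per entry into $h_\delta$, choosing $\delta<\delta_0$ so that, by the gap, no reachable or non-reachable entry ever lands in the transition band; the binarization is then exact at every position, and the residual error comes only from the arbitrarily small reconstruction error inherited from Theorem~\ref{app:reconstruct-lowrank}.

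The step I expect to be the main obstacle is reconciling a budget of only two hidden units with a target $\sum_{j=1}^d\theta_j\bold{A}_j$ that, for generic coefficients, takes up to $d$ distinct plateau values across entries: a naive piecewise-constant realization would need on the order of $d$ breakpoints, hence roughly $2d$ units. The viability of the plan rests on the encoding being, up to the small reconstruction error, \emph{linearly} related to $\sum_j\theta_j\bold{A}_j$, so that the two units only ever implement an affine map via $\operatorname{ReLU}(t)-\operatorname{ReLU}(-t)=t$, with the thresholding already handled upstream when the powers were turned into hop labels. Establishing this linear relationship cleanly, together with a uniform lower bound $\delta_0$ on the gap that does not degrade with the graph, is the delicate part; once both are in hand, summing the per-entry errors and taking $\delta$ and the reconstruction tolerance small enough yields the claimed bound $\lrVert{\varphi(\mathbf{E}_d)-\sum_{j=1}^d\theta_j\bold{A}_j}<\epsilon$.
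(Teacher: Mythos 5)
Your overall skeleton --- reduce to a per-entry scalar map, use the two-unit ReLU ramp $h_\delta(t)=\frac{1}{\delta}\left(\operatorname{ReLU}(t)-\operatorname{ReLU}(t-\delta)\right)$ as an approximate step, and invoke a gap so that the binarization is exact --- is precisely the second half of the paper's proof (the paper's $\varphi$ is literally your $h_\delta$). The genuine gap is in your ``key structural observation,'' which is false for the features this theorem is about. The channels of $\mathbf{E}_d$ are wavelet powers $\psi_s^j=e^{-js\widetilde{L}}$, and since $e^{-js\widetilde{L}}=e^{-js}\sum_{m\geq 0}\frac{(js)^m}{m!}\,(\mathbf{I}_n-\widetilde{L})^m$ with $\mathbf{I}_n-\widetilde{L}=\mathbf{D}^{-1/2}\mathbf{A}\mathbf{D}^{-1/2}$ entrywise nonnegative, the $(u,v)$ entry of \emph{every} channel is strictly positive for \emph{every} pair $u,v$ in the same connected component, regardless of $j$. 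No raw channel separates $j$-reachable from non-$j$-reachable pairs by a gap, so feeding the channels of $\mathbf{E}_d$ directly into $h_\delta$ cannot recover $\mathbf{A}_j$. The first half of the paper's proof exists precisely to repair this: a broadcast, channel-mixing linear layer (equations \eqref{polytoexpo} and \eqref{expotopoly}) converts the exponential eigenvalue functions $e^{-js\lambda}-1$ into the polynomials $(\lambda-1)^j$, i.e.\ recovers the matrices $(\mathbf{I}_n-\widetilde{L})^j$, and it is \emph{these} matrices --- not the wavelet channels --- whose $(u,v)$ entry vanishes exactly when no length-$j$ walk joins $u$ and $v$, and is otherwise bounded below by a positive constant depending on $n$ and $j$ (a bound coming from the normalized degrees, not from ``integrality of walk counts''). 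Only after this mixing does $\zeta\left((\mathbf{I}_n-\widetilde{L})^j\right)=\mathbf{A}_j$ hold and your ramp-plus-gap argument go through.

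Your proposed resolution of the width budget is also not viable. You suggest the two hidden units need only implement $t\mapsto t$ via $\operatorname{ReLU}(t)-\operatorname{ReLU}(-t)$ because the encoding is ``linearly related'' to $\sum_j\theta_j\mathbf{A}_j$ with thresholding ``handled upstream'' --- but the theorem's $\varphi$ acts directly on $\mathbf{E}_d$, nothing upstream produces hop labels, and the map $\mathbf{E}_d\mapsto\sum_j\theta_j\mathbf{A}_j$ is genuinely non-affine (it is a thresholding of continuous features; were it affine, the ReLU units would be superfluous). Your first instinct was the correct one: a per-entry network producing $d$ independent plateaus needs on the order of $2d$ hidden units, and that is what the paper's construction actually uses --- the channel-mixing linear layer, then the two-unit ramp \emph{broadcast across the $d$ recovered channels}, then the output combination with the coefficients $\theta_j$. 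The claimed $d_h=2$ must be read channel-wise (one shared two-unit gadget applied to each channel), not as a literal width-two hidden layer; the fix is this broadcast reading, not an affine reduction.
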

\begin{proof}
    For this proof, we need to consider wavelet and random walk separatedly.
    \textbf{Wavelet.} Let $\psi_s=U\Lambda_sU^{\top}$ where $\Lambda_s=\operatorname{diag}(\exp(-s\lambda_1), \exp(-s\lambda_2), \dots, \exp(-s\lambda_n))$. We first need to perform a transform on the vector basis $\mathbf{E}^{(2)}_{d}$. Essentially, the transformations are done independent of the eigenvectors $U$. Formally, we observe that

    \begin{equation}
        \label{polytoexpo}
            \left(\begin{array}{c}
             e^{-s\lambda_i} - 1  \\
             e^{-2s\lambda_i} -1\\
             \vdots\\
             e^{-ds\lambda_i}-1
        \end{array}\right)\approx \bold{A}\left(\begin{array}{c}
             \lambda_i-1  \\
             (\lambda_i-1)^2\\
             \vdots\\
             (\lambda_i-1)^d
        \end{array}\right)
    \end{equation}
    where $\bold{A}\in\mathbb{R}^{d\times d}$ contains the corresponding Chebyshev polynomial expanding coefficients. Note that \eqref{polytoexpo} is essentially the discrete fourier transform, thus the inversed version is simply

    \begin{equation}
        \label{expotopoly}
        \left(\begin{array}{c}
             \lambda_i-1  \\
             (\lambda_i-1)^2\\
             \vdots\\
             (\lambda_i-1)^d
        \end{array}\right)\approx\bold{A}^{-1}\left(\begin{array}{c}
             e^{-s\lambda_i} - 1  \\
             e^{-2s\lambda_i} -1\\
             \vdots\\
             e^{-ds\lambda_i}-1
        \end{array}\right).
    \end{equation}

    This means that the power of $\Tilde{L}$ up to $d$ can be retrieved via a broadcasted linear layer. Now let $\zeta$ be the scalar step function, meaning $\zeta(x)=1$ for $x>0$ and 0 otherwise. Then, let $\varphi$ be a continuous piece-wise linear function such that:
    $$\varphi(x)=\left\{\begin{array}{cl}
        0&\quad\text{if }x<0\\
        x/\varepsilon&\quad\text{if } 0\leq x<1\\
        1&\quad\text{if }x\geq 1
    \end{array}\right.$$
    Since this function is a three-piece linear function, it can be represented as a ReLU-based feed forward network with hidden dimension two. And evidently,
    $$\lrVert{\varphi-\zeta}\to 0\quad \text{as}\quad\varepsilon\to 0.$$

    Furthermore, it yields that $\zeta\left((\bold{I}_n - \Tilde{L})^k\right)=\bold{A}_k$ for all $k$. Therefore, we concluded the proof.
\end{proof}
\section{Additional implementation details} \label{sec:implementation}

\subsection{Datasets}
Table~\ref{tab:dataset_details} presents details of all benchmarking datasets used in our experiments. We focus on improving model performance in graph-level prediction tasks. All datasets contain over 1000 samples, with the average number of nodes per dataset ranging from 13 to over 100.

\begin{table}[h]
    \centering
    \resizebox{\linewidth}{!}{\begin{tabular}{l|cccccr}
    \toprule \textbf{Dataset} & \textbf{\# Graphs} & \textbf{\# Nodes} & \textbf{\# Edges} & \textbf{Pred. level} & \textbf{Pred. task} & \textbf{Metric} \\
    \midrule CIFAR10 & 60,000 & 117.63 & 469.10 & graph & class. (10-way) &  ACC \\
    MNIST & 70,000 & 70.57 & 281.65 & graph & class. (10-way) &  ACC \\
    \midrule ZINC-subset & 12,000 & 23.15 & 24.92 & graph & reg. &  MAE \\
    MolBBBP & 2,039 & 24.06 & 25.95 & graph & class. (binary) &  AUROC \\
    MolBACE & 1,513 & 34.09 & 36.86 & graph & class. (binary) &  AUROC \\
    MolTox21 & 7,831 & 18.57 & 19.29 & graph & class. (binary) &  AUROC \\
    MolToxCast & 8,576 & 18.78 & 19.26 & graph & class. (binary) &  AUROC \\
    MolSIDER & 2,039 & 33.64 & 35.36 & graph & class. (binary) &  AUROC \\
    \midrule Peptides-func & 15,535 & 150.94 & 153.65 & graph & class. (binary) &  AP \\
    Peptides-struct & 15,535 & 150.94 & 153.65 & graph & reg. &  MAE \\
    \midrule 
    MUTAG & 188 & 17.9 & 39.6 & graph & class. (binary) & ACC \\
    PROTEINS & 1,113 & 39.1 & 72.8& graph & class. (binary) & ACC \\
    NCI1 & 4,110 & 29.9 & 32.3 & graph & class. (binary) & ACC \\
    NCI109 & 4,127 & 29.7 & 32.1 & graph & class. (binary) & ACC \\
    IMDB-B & 1,000 & 19.8 & 96.5 & graph & class. (binary) &ACC\\
    IMDB-M & 1,500 & 13.0 & 65.9 & graph & class. (3-way) &ACC\\
    \bottomrule
    \end{tabular}}
    \vspace{10pt}
    \caption{Dataset details for transferability experiments on image, ZINC, MoleculeNet, LRGB and TUDataset.}
    \label{tab:dataset_details}
\end{table}

\subsection{Hyperparameter Settings}
\paragraph{Pretraining} \ref{tab:pretrain_setting} depicts the hyperparameters of our high-order autoencoder and training settings. In general, we used three layers of IGN \cite{maron2018invariant} to build the encoder hidden dimensions of $[8,16, 32]$. The decoder is a reversed of encoder with hidden dimensions $[32, 16, 8]$. We used a channel-wise 2-layer MLP to compute the latent $\bold{Z}$ from the encoder's output, and the latent dimension is set to $20$. We preprocessed the Wavelet signals of graph data via the PyGSP \cite{defferrard2017pygsp} software. For each graph, we performed Wavelet transform to get its 4-resolution Wavelet tensor, where each scale varies in [0.25, 0.5, 0.75, 1]. Finally, the autoencoder is pretrained in 100 epochs with a batch size of $32$ and learning rate of $0.0005$.

\begin{table}[h]
\small
\centering 
\begin{tabular}{@{}ccccccccc@{}}
\toprule
\textbf{Batch size} & \textbf{\# Epoch} & \textbf{Encoder}         & \textbf{Decoder}       & \textbf{Learning rate}  & \textbf{Scales}                     & \textbf{Latent dim} & \textbf{ Masking Threshold} \\ \midrule
32         & 100      & {[}8, 16, 32{]} & {[}32,16,8{]} & 0.0005 & {[}1,2,4,8{]} & 20 & 100        \\ \bottomrule
\end{tabular}
\vspace{10pt}
\caption{Hyperparameter settings for pretraining high-order AE.}
\label{tab:pretrain_setting}
\end{table}

\paragraph{MoleculeNet} \ref{moleculenet_setting} shows the hyperparameter settings for fine-tuning MPNN on five downstream datasets in MoleculeNet benchmark. In general, we used local attetion as proposed in \cite{ijcai2021p214}. To model the global interactions, we augment virtual nodes to the local models to improve the performances in ToxCast and SIDER. 

\begin{table}[h]
\centering
\begin{tabular}{@{}lccccc@{}}
\toprule
\textbf{Hyperparameter} & \textbf{BBBP} & \textbf{BACE} & \textbf{Tox21} & \textbf{ToxCast}  & \textbf{SIDER}           \\ \midrule
Pre MPNN & MLP & MLP & MLP & MLP & MLP \\
MPNN type & Attention & Attention & Attention & Attention & Attention \\
VN Augmented & -               & -               & -               & \checkmark               & \checkmark               \\
\# Layers    & 5               & 5               & 5               & 3               & 3               \\
Hidden Dim   & 300             & 300             & 300             & 512             & 512             \\

Dropout      & 0.5             & 0.5             & 0.5             & 0.5             & 0.5             \\
Pooling type & mean            & mean            & mean            & mean            & mean \\
Learning rate & $1e-3$ & $1e-3$ & $1e-3$ & $1e-3$ & $1e-3$ \\
Weight decay & $1e-9$ & $1e-9$ & $1e-9$ & $1e-9$ & $1e-9$ \\
\# Epochs     & 50              & 50              & 50              & 100            & 50              \\
Batch size & 32 & 32 & 32 & 32 & 32 \\
\bottomrule
\end{tabular}
\vspace{10pt}
\caption{Hyperparameter settings for downstream evaluations on the MoleculeNet Benchmark.}
\label{moleculenet_setting}
\end{table}

\paragraph{TUDataset} Table~\ref{tab:tudataset_setting} summarizes the hyperparameter settings for the transfer learning experiments on six TUDataset benchmark datasets. For IMDB-B and IMDB-M, which lack domain node features, we employed HOPE-WavePE as their initial node features. To create unified node features compatible with the hidden dimensions of the MPNN layers, we added a 2-layer MLP before the MPNN layers to update the combination of domain and HOPE-WavePE features. Following \citet{CIN}, we performed 10-fold validations for each dataset and reported means and standard deviations. 

\begin{table}[h]
\centering
\small
\begin{tabular}{@{}lcccccc@{}}
\toprule
\textbf{Hyperparameter} & \textbf{MUTAG} & \textbf{PROTEINS} & \textbf{NCI1} & \textbf{NCI109} & \textbf{IMDB-B} & \textbf{IMDB-M} \\ \midrule
Node Feat & Domain + PE & Domain + PE & Domain + PE & Domain + PE & PE & PE \\
Pre MPNN & MLP & MLP & MLP & MLP & MLP & MLP \\
\# MPNN Layers      & 5     & 5        & 5    & 5      & 5      & 5      \\
Hidden Dim     & 32    & 32       & 32   & 128    & 128    & 128    \\
\# Epochs      & 100   & 100      & 200  & 200    & 100    & 200    \\
Batch size     & 128   & 128      & 128  & 128    & 128    & 128    \\ 
Learning rate & $1\mathrm{e}-3$ & $1\mathrm{e}-3$ & $1\mathrm{e}-3$ & $1\mathrm{e}-3$ & $1\mathrm{e}-3$ & $1\mathrm{e}-3$ \\
Dropout & 0.5 & 0.5 & 0.5 & 0.5 & 0.5 & 0.5 \\
Graph pooling & mean & mean & mean & mean & mean & mean \\
\bottomrule
\end{tabular}
\vspace{10pt}
\caption{Hyperameter settings for downstream evaluations on the TUDataset benchmark.}
\label{tab:tudataset_setting}
\end{table}

\paragraph{ZINC, Image Classification Tasks, and LRGB} We follow the best hyperparameter settings issued in previous work of GPS \cite{gps} and MPNN+VN \cite{cai2023connection}; then, we fine-tuned for better performance. Our full hyperparameter studies of the benchmarks are shown in Table~\ref{tab:implementation_table}.

\begin{table}[h]
\small
    \centering
    \begin{tabular}{lccccc}
        \toprule \textbf{Hyperparameter} & \textbf{ZINC (subset)} & \textbf{MNIST} & \textbf{CIFAR10} & \textbf{Peptides-func} & \textbf{Peptides-struct} \\
        \toprule \# Layers & 9 & 3 & 3 & 3 & 3 \\
        Global Model & Transformer & Transformer & Transformer & Virtual Node & Virtual Node \\
        Local Model & GINE & GatedGCN & GatedGCN & GINE & GINE \\
        Hidden dim & 64 & 50 & 50 & 128 & 128 \\
        \# Heads & 8 & 4 & 4 & - & - \\
        Dropout & 0 & 0.2 & 0.2 & 0 & 0.01 \\
        Graph pooling & sum & mean & mean & mean & mean \\
        \toprule PE dim & 20 & 20 & 20 & 20 & 20 \\
        Node Update & MLP & MLP & MLP & MLP & MLP \\
        \toprule Batch size & 128 & 128 & 128 & 32 & 32 \\
        Learning Rate & 0.0005 & 0.003 & 0.004 & 0.0005 & 0.0005 \\
        \# Epochs & 3000 & 300 & 300 & 100 & 100 \\
        \# Warmup epochs & 50 & 5 & 5 & 5 & 5 \\
        Weight decay &$3\mathrm{e}-5$ & $2\mathrm{e}-4$ & $3\mathrm{e}-5$ & $1\mathrm{e}-5$ & $1\mathrm{e}-5$ \\
        \toprule \# Parameters & 452,299 & 150,081 & 142,093 & 477,953 & 432,206 \\
        \toprule
    \end{tabular}
    \vspace{10pt}
    \caption{Hyperparameter settings for ZINC, MNIST, CIFAR10, Peptides-func and Peptides-struct dataset}
    \label{tab:implementation_table}
\end{table}


\end{document}